\documentclass[11pt, a4paper, singlecolumn, gdm]{google}
\usepackage[authoryear, sort&compress, round]{natbib}
\usepackage[english]{babel}
\usepackage{tcolorbox}
\usepackage[ruled,linesnumbered,noend]{algorithm2e}
\SetCommentSty{textrm} 
\SetKwComment{tcp}{$\triangleright$\ }{}

\usepackage{tikz}
\usetikzlibrary{arrows.meta,patterns,positioning}

\newcommand{\eps}{\varepsilon}
\providecommand{\coloneqq}{:=}
\usepackage{caption}
\usepackage{subcaption}
\usepackage{cleveref}
\newtheorem{theorem}{Theorem}
\newtheorem{lemma}[theorem]{Lemma}
\newtheorem{remark}[theorem]{Remark}

\crefname{theorem}{Theorem}{Theorems}
\crefname{lemma}{Lemma}{Lemmas}
\crefname{proposition}{Proposition}{Propositions}
\crefname{assumption}{Assumption}{Assumptions}
\crefname{algorithm}{Algorithm}{Algorithms}
\crefname{remark}{Remark}{Remarks}

\AddToHook{env/lemma/begin}{\crefalias{theorem}{lemma}}
\AddToHook{env/remark/begin}{\crefalias{theorem}{remark}}
\AddToHook{env/proposition/begin}{\crefalias{theorem}{proposition}}
\AddToHook{env/assumption/begin}{\crefalias{theorem}{assumption}}

\uselogo{} 

\def\R{{\mathbb{R}}}

\def\calA{{\mathcal{A}}}

\def\calD{{\mathcal{D}}}

\makeatletter
\newcommand{\corner@glyph}[2]{%
  \begingroup
  \setbox0=\hbox{$\m@th#1\llcorner$}%
  \vcenter{\hbox{%
    \raisebox{\dimexpr\ht0-\height\relax}[\ht0][\dp0]{$\m@th#1#2$}%
  }}%
  \endgroup
}
\newcommand{\lowercorner}{\mathord{\mathpalette\corner@glyph\llcorner}}
\newcommand{\uppercorner}{\mathord{\mathpalette\corner@glyph\urcorner}}
\makeatother

\newbool{showplanbox}
\boolfalse{showplanbox}   

\newtcolorbox{internalplanbox}[1][]{
  colframe = gray!25,
  colback  = gray!10,
  coltitle = gray!20!black,  
  title    = {\textbf{Plan}},
  #1,
}

\NewEnviron{planbox}[1][]{%
  \ifbool{showplanbox}{%
    \begin{internalplanbox}[#1]
      \BODY
    \end{internalplanbox}%
  }{%
  }%
}

\title{Ambiguous Fits: The Rashomon Effect and\\Challenges in Scaling Law Estimation}

\correspondingauthor{bkm28@cam.ac.uk, sfrei@google.com}

\reportnumber{0001} 

\author[1]{Bruno Mlodozeniec$^*$}
\author[2]{Spencer Frei}
\affil[1]{University of Cambridge}
\affil[2]{Google DeepMind}

\begin{abstract}
Estimating neural scaling laws involves an end-to-end pipeline: designing a scaling recipe, training models across scales, evaluating models on a target metric, and deploying an algorithm to fit a scaling law to the resulting data. This final fitting stage is often treated as routine: standard procedures minimize a non-convex fitting loss using local optimization from a grid of initializations, and return the best solution found.
Yet such heuristic searches can fail in two distinct ways: they offer no guarantee of returning a nearly optimal solution, and they obscure whether fundamentally different parameter choices fit the data nearly as well. Indeed, we show that standard scaling law fitting is subject to a Rashomon effect~\citep{breiman2001}: the sublevel sets of near-minimizers correspond to broad parameter regions whose extrapolations diverge substantially at scale. We resolve these issues with a deterministic branch-and-bound algorithm that guarantees nearly optimal fitting loss and exhaustively maps the near-minimizers to the implied range of scaling curves.
\end{abstract}
\begin{document}
\maketitle
\renewcommand{\thefootnote}{\fnsymbol{footnote}}
\footnotetext[1]{Work done while a Student Researcher at Google DeepMind.}
\renewcommand{\thefootnote}{\arabic{footnote}}

\begin{figure}[!htb]
    \centering
    \includegraphics[width=1.0\linewidth]{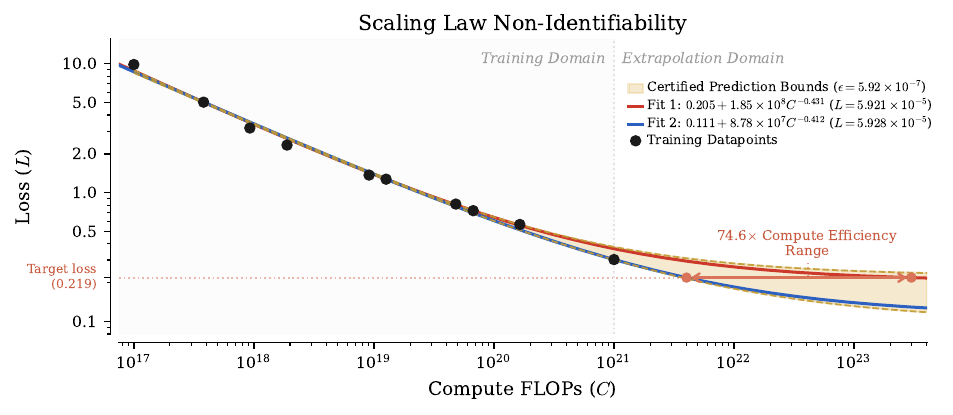}
    \caption{\textbf{The Rashomon Effect in Scaling Law Fitting.} Synthetic scaling ladder data on which two fits with widely different extrapolations achieve nearly identical fitting losses. The yellow band, computed with our certified prediction-bound algorithm (\Cref{sec:certified-prediction-envelopes}), contains the predicted loss of every fit whose fitting loss is within $1\%$ of the optimum. Details are in \Cref{sec:experimental-details}.}
    \label{fig:non-identifiability-scaling-law}
\end{figure}
\section{Introduction}
\label{sec:intro}

\label{sec:scaling-laws-llms}

Neural scaling laws describe empirical power-law relationships between training resources (e.g., compute, data, and parameters) and the resulting model performance (e.g., validation loss)~\citep{kaplan2020scaling,hoffmann2022training}.  These empirical relationships are an important tool in frontier AI model development pipelines: scaling laws end up shaping architecture decisions, compute allocations, and more.   The estimation of a scaling law involves a series of decisions, roughly consisting of four distinct stages:
\begin{enumerate}
    \item \textbf{Design of the scaling recipe}: selecting model architectures, hyperparameters, and data mixtures for models with different compute, parameter, and token budgets.
    \item \textbf{Training}: training models across scales.
    \item \textbf{Evaluation}: measuring trained model performance on a target metric (e.g., the cross-entropy loss on a held-out validation set).
    \item \textbf{Scaling law fitting}: applying a fitting algorithm to the observed dataset of \texttt{(scale, loss)} pairs, where \texttt{scale} can represent FLOPs, parameters, tokens, or other quantities. This returns the parameters $\theta$ of a parametric scaling curve $\texttt{Loss}(\texttt{scale};\theta)$.
\end{enumerate}
While significant effort has been focused on the earlier stages of this pipeline \citep{porian2024resolving,gadre2025gemstones,choshen2024hitchhiker,li2026spend,madaan2024quantifying}, the final fitting stage has largely been treated as routine.  The standard approach in the literature is known as Chinchilla Approach 3~\citep{hoffmann2022training}, which parameterizes the scaling law as a sum of power laws,
\[ L(x; \theta) = E + \sum_{i=1}^k A_i x_i^{-\alpha_i}, \quad \theta = (E, A_1, \alpha_1, \dots, A_k, \alpha_k),\]
and estimates $\theta$ by minimizing a Huber loss over the residual differences between $\log L(x;\theta)$ and observed (log-transformed) losses on the scaling ladder. This minimization is performed by selecting a grid of initialization points, running a local optimization method such as L-BFGS from each point, and choosing the resulting parameters that attain the smallest empirical fitting loss.

There are a number of potential problems with this approach, however. The first is that the fitting loss is non-convex, so local optimization methods are not guaranteed to find near-global minima. Although this approach will return a fit with nearly optimal loss in the limit of an infinitely fine initialization grid, it is unclear how far the fit returned by the default approach is from optimal.

The second issue is that heuristic optimizers return a single parameter fit, obscuring the fact that numerical solvers only guarantee finding an arbitrary element from an $\eps$-sublevel set of near-minimizers. Because scaling law loss landscapes lack strong convexity, this sublevel set is subject to a pronounced Rashomon effect~(\citet{breiman2001}; see~\citet{rudin2024amazing} for a recent perspective): it spans broad parameter regions that fit the training ladder equally well, but make diverging predictions under extrapolation (see Figure~\ref{fig:non-identifiability-scaling-law}). 
Which element of this set is returned depends on incidental choices, such as the initialization grid and the solver settings, which vary considerably across the literature~\citep{li2025misfitting}. Such choices are naturally judged by how well they minimize the fitting loss. Less attention has been paid to the fact that fits with nearly equal loss can extrapolate very differently, so that these choices, even in cases where they barely affect the fitting loss, can substantially change the forecast.
This motivates characterizing the range of predictions made by all fits in the sublevel set, rather than relying on a single point estimate.

In this work, we propose replacing heuristic local search in scaling law fitting with a certified optimization framework.  The foundation of our approach is the observation that for the standard power-law formulation, a coordinate reparameterization makes it straightforward to compute uniform lower and upper bounds on log-residual fitting losses over any parameter box. These loss bounds enable a branch-and-bound algorithm that systematically prunes suboptimal regions of parameter space while refining promising ones. Rather than returning an arbitrary local minimum, this procedure provides two complementary guarantees: it certifies when a candidate fit is within an $\eps$-tolerance of the global optimum, and it exhaustively maps out the full family of near-optimal fits to compute certified prediction envelopes at unobserved scales.

The main contributions of our work can be summarized as follows.

\begin{enumerate}
    \item \textbf{Characterization of the Rashomon effect in scaling law estimation.}  Across synthetic datasets and four real empirical scaling ladders from the literature, we show that scaling law fitting is subject to a pronounced Rashomon effect: the $\eps$-sublevel sets of near-minimizers span broad parameter regions that achieve near-optimal fitting loss on the observed data while yielding widely diverging predictions at larger scales. These findings indicate that forecasts at larger scales should not rely on an arbitrary single fit, but should account for the full family of near-minimizers.
    \item \textbf{Certified optimization and prediction envelopes.} We develop a deterministic branch-and-bound algorithmic framework for power-law scaling law fitting that provides three concrete capabilities: 
    \begin{enumerate}[label=(\alph*),itemsep=0.2em] 
    \item \textit{Certified nearly optimal fits:} For any user-specified tolerance $\eps>0$, our algorithm provably returns parameters $\theta$ whose fitting loss is within $
    \eps$ of the global minimum over a declared compact domain. 
    \item \textit{Certified sublevel set cover:} To identify the full family of scaling laws that achieve near-optimal fits, we provide an algorithm that computes an arbitrarily tight cover of the $\eps$-sublevel set of near-minimizers of the fitting loss.
    \item \textit{Certified prediction envelopes:} For any unobserved target scale, our algorithm computes provable lower and upper bounds on the predicted loss across all parameter configurations in the $\eps$-sublevel set, guaranteeing that no near-optimal fit can forecast a value outside this envelope. 
    \end{enumerate}
\end{enumerate}

\section{Background}
\label{sec:formulation}

We begin by formalizing the scaling law fitting pipeline. We introduce the power-law hypothesis class, describe the empirical loss objectives used to fit observed scaling data, and review the standard heuristic optimization procedures used in practice.

\subsection{The Scaling Law Estimation Problem}
\label{sec:estimation-problem}

Below, we first introduce the various components of the scaling law estimation problem, including the parametric form, fitting loss, and the fitting algorithm.

\textbf{Parametric form.} In scaling law estimation, we are typically interested in identifying plausible mappings from \textit{scale} $x \in \R^k$ (e.g., $x=F$, where $F$ denotes the training FLOPs, or $x=(N, D)$, where $N$ is the number of model parameters and $D$ is the number of training tokens~\citep{kaplan2020scaling}) to observations of a final validation loss $y \in \R$. We typically restrict our attention to a power-law \textbf{parametric family} (hypothesis class) $\mathcal{H}$ of functions, such as
\begin{align}
L(x; \theta) = E + \sum_{i=1}^k A_i \cdot x_i^{-\alpha_i}, \quad \theta = (E, A_1, \alpha_1, \dots, A_k, \alpha_k). \label{eq:chinchilla-power-law-form}
\end{align}
The above form, given in \citep{hoffmann2022training}, is arguably the most popular hypothesis class for scaling law fitting, and we primarily focus on this class in this work. However, other common choices exist.\footnote{E.g. the $L((N, D);\theta) = \left(A_N /N^{\alpha_N / \alpha_D} + A_D / D \right)^{\alpha_D}$ form in \citep{kaplan2020scaling}, or the form with the coupling exponent $L(x; \theta) = E + \left(\sum_{i=1}^k A_i \cdot x_i^{-\alpha_i}\right)^\gamma, \theta = (E, A_1, \alpha_1, \dots, A_k, \alpha_k,\gamma)$ in \citep{busbridge2025distillationscalinglaws,bethune2026designspacetrimodalmasked,videau2026skalingchinchillasexponentsmeet,ramapuram2026scalingpropertiescontinuousdiffusion}.}

\textbf{Training data.} The goal is typically to find members of the family $\mathcal{H}$ that fit a \textbf{training dataset}, i.e., a sequence of $N_{\mathrm{obs}}$ collected observations $\mathcal{D}=\left((x^{(n)}, y^{(n)})\right)_{n=1}^{N_{\mathrm{obs}}}$. Let $\mathfrak{D}=\cup_{N_{\mathrm{obs}}=1}^\infty \left(\R^k_{>0} \times \R_{>0}\right)^{N_{\mathrm{obs}}}$ denote the space of possible datasets. We assume that the scales and targets are strictly positive: $x^{(n)}, y^{(n)} \succ 0$.

\textbf{Fitting algorithm.} The ``best'' fit from a parametric family is identified by a \textbf{fitting algorithm} $\mathcal{A}: \mathfrak{D} \to \R^{2k + 1}$ that takes a dataset $\mathcal{D}$ and returns the ``best'' parameters. The fitting algorithm typically approximately minimizes a \textbf{fitting loss} $L_\texttt{fit}: \R^{2k + 1} \times \mathfrak{D} \to \R$. For example,
\begin{align*}
    L_\texttt{fit}(\theta; \mathcal{D}) = \frac{1}{N_{\mathrm{obs}}} \sum_{n=1}^{N_{\mathrm{obs}}} \ell(L(x^{(n)}; \theta), y^{(n)}),
\end{align*}
where the \textbf{residual loss} $\ell: \R \times \R \to \R$ might, for instance, be the $L_2$ loss $\ell(\hat{y}, y)= (\hat y - y)^2$, or the Huber loss in log-space $\ell(\hat{y}, y)= \operatorname{Huber}_{10^{-3}} (\log \hat{y} - \log y)$ (where $\operatorname{Huber}_\delta(z)=\tfrac12 z^2$ if $\lvert z\rvert\le\delta$ and $\operatorname{Huber}_\delta(z)=\delta(\lvert z\rvert-\tfrac{\delta}{2})$ otherwise is the Huber loss~\citep{huber1964robust} with parameter $\delta$). The latter is the most commonly used residual loss and is used, for example, in Chinchilla Approach~3~\citep{hoffmann2022training}. Given a fitting loss, the choice of an \textbf{optimizer} fully specifies a fitting algorithm; the optimizer dictates how well the fitting loss is optimized and which parameter configuration is returned when several attain the same fitting loss.

Ultimately, the output of the fitting algorithm, $\theta^\ast := \calA(\calD)$, will be used for extrapolation: to predict the validation loss as $L(x^\ast; \theta^\ast)$ at a larger, previously unobserved scale $x^\ast$.

\subsection{Chinchilla Approach 3}
\label{sec:chinchilla-fitting}

In the remainder of this work, we focus primarily on the widely used Chinchilla approach \citep{hoffmann2022training}. For the parametric family \Cref{eq:chinchilla-power-law-form} introduced above, this approach minimizes
\begin{align*}
    L_{\texttt{fit}}(\theta;\mathcal{D})
    =
    \frac{1}{N_{\mathrm{obs}}}\sum_{n=1}^{N_{\mathrm{obs}}}
    \operatorname{Huber}_{10^{-3}}
    \left(
        \log L(x^{(n)};\theta)-\log y^{(n)}
    \right),
\end{align*}
which we refer to as the \textbf{log-residual Huber loss}. The original Chinchilla setting has $k=2$ and hence five parameters, but the same objective applies without modification to any input dimensionality $k$, in which case $\theta \in \R^{2k+1}$.

The accompanying optimizer uses multi-start local optimization. Writing $e=\log E$ and $a_i=\log A_i$, the original $k=2$ procedure initializes from the Cartesian grid $e\in\{-1,-0.5,\ldots,1\}, a_i\in\{0,5,\ldots,25\}, \alpha_i\in\{0,0.5,\ldots,2\}$, runs L-BFGS from every grid point, and returns the optimized parameters with the lowest fitting loss across the grid. Unless stated otherwise, we follow common practice in the literature~\citep{busbridge2025distillationscalinglaws} and use \texttt{scipy.optimize.minimize} with \texttt{method="L-BFGS-B"} and otherwise default settings in all experiments~\citep{2020SciPy}.
For general $k$, we use the same boundaries for each corresponding parameter: $e\in[-1,1]$, $a_i\in[0,25]$, and $\alpha_i\in[0,2]$ for every $i$.
We note that these boundaries and the specific grid density are entirely heuristic~\citep{hoffmann2022training}.
In some experiments, we control the \textbf{grid density} $K$ as a proxy for `optimization effort'. We retain the same ranges and place $K$ values at the midpoints of $K$ equal subintervals of each range, yielding $K^{2k+1}$ initializations.

\section{Ambiguous Fits: The Rashomon Effect in Scaling Law Estimation}
\label{sec:certified}
Although power-law scaling curves contain only a handful of parameters, fitting them to datasets that often contain only a few dozen data points presents subtle optimization and statistical challenges. Standard multi-start local solvers leave two critical questions unaddressed: whether the returned parameters are close to globally optimal, and whether alternative near-optimal fits yield fundamentally different predictions at scale. These are the questions we investigate in this section.

\paragraph{Heuristic fitting does not certify how well it optimizes.}
The widely used Chinchilla fitting procedure applies a local optimizer from a hand-chosen grid of initializations (see \Cref{sec:chinchilla-fitting}).
The procedure is deterministic once that grid is fixed, but the returned fit depends on which initializations were used and on the hyperparameters of the local optimizer.
It returns a fit and its fitting loss, but no lower bound on the minimum fitting loss, and hence no indication of how suboptimal the returned fit is.
To see how much this can matter for extrapolation, in \Cref{fig:viable-extrap-fans-all} we show the Chinchilla fit together with other fits attaining the same or lower fitting loss on scaling ladder datasets from the literature~\citep{apple2025revisiting,marin2026delphi,besiroglu2024chinchilla,dubey2024llama}. All of these fits are at least as good as the Chinchilla fit by the fitting criterion, yet their predictions vary substantially.
Concretely, for each ladder we fit the five-parameter law $L(N,D)=E+A_1N^{-\alpha_1}+A_2D^{-\alpha_2}$ to all observations using Chinchilla Approach~3 with $K^5$ initializations, $K\in\{2,5,8\}$, and record the attained loss $L_{\mathrm{chin},K}$. We then search the sublevel set $\{\theta:L_{\texttt{fit}}(\theta)\le L_{\mathrm{chin},K}\}$ for the fits whose predictions at $300\times$ the largest training compute are highest and lowest, and plot their scaling curves along the slice $D=20N$ (\Cref{sec:experimental-details}).
The envelope at density $K$ is determined by the attained loss $L_{\mathrm{chin},K}$: a lower loss gives a smaller sublevel set, and hence a narrower envelope.
Denser grids usually attain lower losses, and in \Cref{fig:viable-extrap-fans-all} the envelopes generally narrow as $K$ increases.
This is not guaranteed, however. The grids for different $K$ are not nested (each places $K$ points at the cell centers of each coordinate range, so the $2^5$ grid is not a subset of the $5^5$ grid), and the best of the local optimizations depends on which basins the initializations fall into; on Llama~3, for example, the $2^5$ grid attains a lower loss than the $5^5$ grid.
More fundamentally, without a lower bound on the minimum fitting loss, we cannot tell how far $L_{\mathrm{chin},K}$ is from the global minimum, and hence how much of the envelope reflects imperfect optimization rather than ambiguity inherent in the data.  Using the certified bounds of~\Cref{sec:main-result}, we find that the $5^5$ fits exceed the minimum fitting loss by between 2\% and 180\% across the four ladders.  The envelopes in Figure~\ref{fig:viable-extrap-fans-all} therefore reflect both imperfect optimization and the ambiguity we turn to next.

\paragraph{Near-optimal fits can still extrapolate very differently.}
A natural expectation is that this ambiguity disappears with better optimization: once the fitting loss is close to its minimum, which near-minimizer is returned should have little effect on the extrapolation.
This need not be the case. Scaling law estimation can exhibit the Rashomon effect~\citep{breiman2001}: the $\eps$-sublevel set can contain near-minimizers with virtually identical fitting loss on observed data but substantially different scaling law curves. In Figure~\ref{fig:non-identifiability-scaling-law}, we illustrate this on a synthetic scaling ladder: two power-law fits whose fitting losses are certified to be within $1\%$ of the optimum predict a $74.6\times$ difference in the compute needed to reach a given loss at $300\times$ the training range.
Returning a single fit obscures this family of near-optimal alternatives, presenting one of them as though it were uniquely determined by the data.

\begin{figure}[!t]
    \centering
    \includegraphics[width=\linewidth]{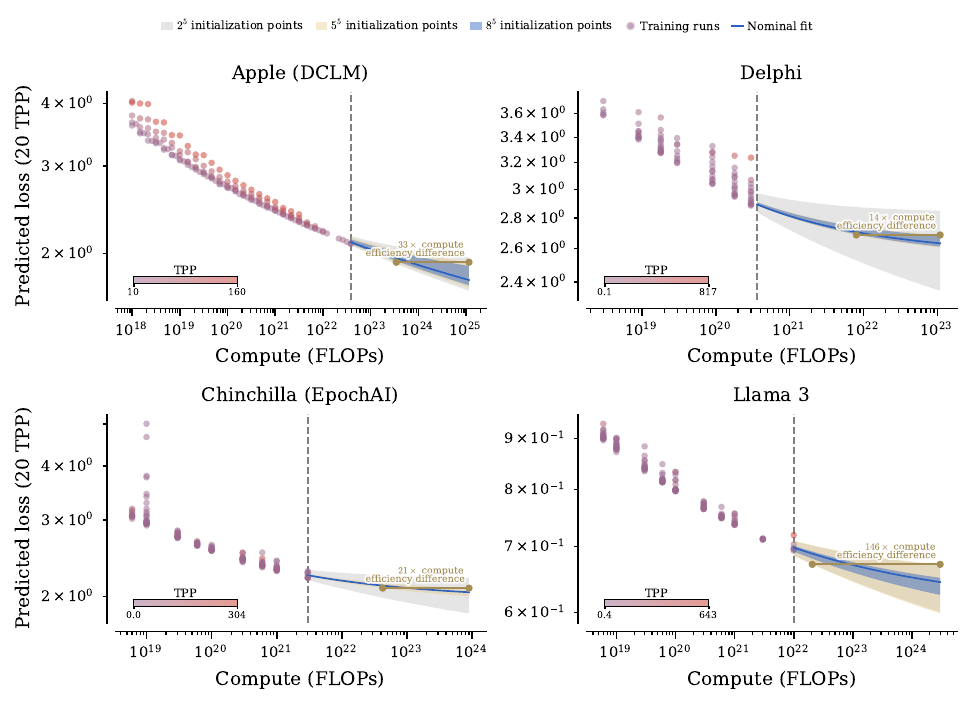}
    \caption{\textbf{The Rashomon Effect and Sublevel Prediction Envelopes Across Scaling Ladders.} 
    We run Chinchilla Approach~3 on four different scaling ladders taken from the literature (see \Cref{app:scaling-ladder-datasets}) with different initialization grid densities (following the standard heuristic parameter ranges $\log E \in [-1, 1]$, $\log A_1, \log A_2 \in [0, 25]$, and $\alpha_1, \alpha_2 \in [0, 2]$), where a grid with $5^5$ initialization points is closest to the \citet{hoffmann2022training} default.
    We record the attained fitting loss $L_{\mathrm{chin}, K}$ at each density $K$, search for alternative parameter configurations in the sublevel set achieving $L_{\texttt{fit}}(\theta) \le L_{\mathrm{chin},K}$, and plot the resulting prediction envelopes along a canonical $20$ tokens-per-parameter ($D = 20N$) slice.  The extreme fits are found by local search (\Cref{sec:experimental-details}), so the true envelopes can only be wider.
    With the standard Chinchilla approach at a $5^5$ grid initialization (gold band), we observe a pronounced Rashomon effect: fits in the sublevel set that match the training data just as well as the returned fit yield a compute efficiency difference between $14\times$ and $146\times$ for a target loss. Horizontal segments indicate this compute efficiency range.  While increasing grid density $K$ generally shrinks the sublevel envelope  (\Cref{sec:certified}), heuristic optimization provides no certificate of how close the returned fit is to global optimality or how wide the true envelope remains.}
    \label{fig:viable-extrap-fans-all}
\end{figure}

\section{Certified Fitting and Sublevel Prediction Envelopes}
\label{sec:main-result}

Scaling law forecasts can determine extremely expensive training decisions in a typical AI research pipeline.
Relying on an unverified single fit leaves practitioners with no guarantee of how close that fit is to the global optimum, or what alternative scaling curves fit the observed data nearly as well.
In this section, we introduce a branch-and-bound approach that resolves these issues by providing three methods with three concrete guarantees: (a) a method that returns a candidate fit certified to be within an $\epsilon$-tolerance of the global minimum, (b) a method for constructing an arbitrarily tight box cover of the entire $\epsilon$-sublevel set of near-minimizers, and (c) a method for computing tight lower and upper bounds on the prediction envelopes across all viable fits.

Our core results are given below in \Cref{thm:main}. All guarantees are relative to a declared compact search domain $\Theta$, of the form made precise in \Cref{sec:setup-monotonicity}.\footnote{In practice, we always choose this domain to encompass the domain for the Chinchilla Approach~3; i.e. we always make the search range more conservative.}

\newpage
\begin{theorem}
\label{thm:main}
Fix a scaling law dataset $\mathcal{D} = \left((x^{(n)}, y^{(n)})\right)_{n=1}^{N_{\mathrm{obs}}}$, with $x^{(n)} \in \R_{>0}^k, y^{(n)} \in \R_{>0}$, and a compact search domain $\Theta$ of the form specified in \Cref{eq:original-search-domain}.
Define the $\epsilon$-sublevel set $\mathcal{S}_\epsilon = \{\theta \in \Theta: L_\texttt{fit} (\theta) - \min_{\theta' \in \Theta} L_\texttt{fit} (\theta') \leq \epsilon\}$. For the scaling law $L(x; \theta) = E + \sum_{i=1}^k A_i x_i^{-\alpha_i}$ with log-residual Huber fitting loss, the following branch-and-bound algorithms are provided:

\begin{enumerate}[label=(\alph*)]
    \item \textbf{Certified fitting.} For any $\epsilon > 0$, \Cref{alg:certified-fit} returns a fit whose original parameters $\hat{\theta}$ satisfy $\hat{\theta} \in \mathcal{S}_\epsilon$ (i.e., $L_{\texttt{fit}}(\hat{\theta}) - \min_{\theta \in \Theta} L_\texttt{fit} (\theta) \leq \epsilon$) in a finite number of iterations.

    \item \textbf{Certified sublevel set cover.} For any $\epsilon > 0$, \Cref{alg:viable-set-cover} returns a finite representation of a region $\mathcal C\subseteq\Theta$ satisfying $\mathcal S_{\epsilon/2}\subseteq\mathcal C\subseteq\mathcal S_\epsilon$.
    
    \item \textbf{Certified sublevel prediction envelope.} For a threshold $\epsilon>0$, a slack parameter $\eta>0$, and a test point $x^\ast$, define the set of ``viable'' predictions at $x^\ast$ as $\mathcal{L}_{x^\ast, \epsilon}=\{L(x^\ast; \theta): \theta \in \mathcal{S}_\epsilon\}$. \Cref{alg:certified-prediction-bound} returns bounds $[\underline{L}_{x^\ast}, \overline{L}_{x^\ast}]$ such that
    \begin{align*}
    \inf \mathcal{L}_{x^\ast, \epsilon + \eta} \leq \underline{L}_{x^\ast} \leq \inf \mathcal{L}_{x^\ast, \epsilon} , && \text{and} && \sup \mathcal{L}_{x^\ast, \epsilon} \leq \overline{L}_{x^\ast} \leq \sup \mathcal{L}_{x^\ast, \epsilon+\eta}.\\
    \end{align*}
    In other words, the returned interval contains the prediction range at $x^\ast$ for parameters in $\mathcal{S}_\epsilon$ and is contained within the prediction range for parameters in $\mathcal{S}_{\epsilon + \eta}$.
\end{enumerate}
\end{theorem}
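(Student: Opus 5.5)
The plan is to reduce all three claims to a single ingredient: computable lower and upper bounds on $L_{\texttt{fit}}$ over any axis-aligned box $B\subseteq\Theta$ in the reparameterized coordinates, whose gap tends to zero uniformly as the box diameter shrinks. In the coordinates $e=\log E$, $a_i=\log A_i$, $\alpha_i$, each prediction $L(x^{(n)};\theta)=e^{e}+\sum_i \exp(a_i-\alpha_i\log x^{(n)}_i)$ is coordinatewise monotone on a box: increasing in $e$ and each $a_i$, and monotone in $\alpha_i$ with direction set by the sign of $\log x_i^{(n)}$. Its minimum and maximum over $B$ are therefore attained at explicitly computable corners, giving an interval $[\underline{p}_n(B),\overline{p}_n(B)]$ that contains $\log L(x^{(n)};\theta)-\log y^{(n)}$ for every $\theta\in B$. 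Huber is convex with minimum at $0$, so its minimum over an interval is $0$ if the interval contains $0$ and otherwise the value at the endpoint nearer to $0$, while its maximum is the larger of the two endpoint values. Averaging over $n$ gives $\underline{F}(B)\le L_{\texttt{fit}}\le\overline{F}(B)$ on $B$. Since $\Theta$ is compact, $L_{\texttt{fit}}$ is uniformly continuous there: Huber is $\delta$-Lipschitz, and $\log L$ is Lipschitz on the compact domain because $L$ is bounded away from $0$. Hence there is a function $\omega(r)\to0$ with $\overline{F}(B)-\underline{F}(B)\le\omega(\operatorname{diam}B)$. The same corner argument gives bounds $[\underline{P}(B),\overline{P}(B)]$ on $L(x^\ast;\theta)$, with gap $\le\omega'(\operatorname{diam}B)$.

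\textbf{(a)} Branch-and-bound keeps a list of boxes, an incumbent upper bound $U$ (the best value of $L_{\texttt{fit}}$ at evaluated points such as box centers), and the global lower bound $\min_B\underline{F}(B)$ over live boxes. It prunes any box with $\underline{F}(B)>U$ and bisects the live box with the smallest lower bound along its longest edge. Pruned boxes cannot contain the minimizer, and the live boxes always cover every minimizer, so $\min\underline{F}\le L^\ast$. The key step is termination. The box with the smallest $\underline F$ is always split, so after finitely many splits every live box has diameter $\le r$, where $\omega(r)\le\epsilon$. For the box $B^\ast$ attaining the global lower bound we then get $U\le L_{\texttt{fit}}(\mathrm{center}(B^\ast))\le\underline{F}(B^\ast)+\epsilon\le L^\ast+\epsilon$. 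I expect this step to be the main obstacle: one must argue that the selection rule cannot keep refining forever while boxes of large diameter with small lower bounds survive. I would first try the standard counting argument: with a fixed diameter threshold, only finitely many boxes can ever be produced.

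\textbf{(b)} First run (a) with tolerance $\epsilon/4$ to get $U$ and $\underline{L}$ with $U-\underline{L}\le\epsilon/4$ and $\underline{L}\le L^\ast\le U$. Then refine every box and classify it as follows:
\begin{itemize}
\item discard it if $\underline{F}(B)>\underline{L}+\epsilon/2$;
\item accept it into $\mathcal C$ if $\overline{F}(B)\le U-\epsilon/4+\epsilon$, which implies $\overline F(B)\le L^\ast+\epsilon$;
\item otherwise split it.
\end{itemize}
Discarded boxes contain no point of $\mathcal S_{\epsilon/2}$, because every $\theta$ in such a box has $L_{\texttt{fit}}(\theta)>L^\ast+\epsilon/2$. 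Accepted boxes lie inside $\mathcal S_\epsilon$. A box that is neither discarded nor accepted has gap $\overline F-\underline F>\epsilon/2$ up to the slack constants, and once $\omega(\operatorname{diam})<\epsilon/4$ this is impossible. So the recursion terminates with a finite union $\mathcal C$ satisfying $\mathcal S_{\epsilon/2}\subseteq\mathcal C\subseteq\mathcal S_\epsilon$. The constants $\epsilon/4$ and $\epsilon/2$ may need minor adjustment.

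\textbf{(c)} For the upper envelope, run branch-and-bound that maximizes $\overline{P}$ over boxes not certified to be outside $\mathcal S_\epsilon$, that is, boxes with $\underline F(B)\le U$ (from (a) at tolerance $\eta/4$) $+\,\epsilon$. These boxes cover $\mathcal S_\epsilon$, so $\overline L:=\max\overline P$ over live boxes is at least $\sup\mathcal L_{x^\ast,\epsilon}$. Refine until every live box attaining the maximum has $\omega(\operatorname{diam})\le\eta/2$ and $\omega'(\operatorname{diam})$ is smaller than the prediction tolerance. Then for such a box, $\overline F(B)\le\underline F(B)+\eta/2\le L^\ast+\epsilon+\eta$, so $B\subseteq\mathcal S_{\epsilon+\eta}$.

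A box inside $\mathcal S_{\epsilon+\eta}$ only gives $\overline L\le\sup\mathcal L_{x^\ast,\epsilon+\eta}+\omega'$, not the claimed bound. To make the inequality exact, I would report $\overline L$ as the value $L(x^\ast;\theta)$ at the maximizing corner of $B$ rather than $\overline P(B)$. That corner lies in $B\subseteq\mathcal S_{\epsilon+\eta}$, and by the monotonicity it equals $\overline P(B)\ge\sup\mathcal L_{x^\ast,\epsilon}$. The lower envelope is handled symmetrically.
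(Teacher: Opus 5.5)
\textbf{Gap in (b).} Your architecture is the paper's: corner enclosures from coordinatewise monotonicity, the exact Huber minimum and maximum over the residual interval, a Lipschitz bound on the enclosure gap, and finiteness of the part of the bisection tree that is ever split. However, the discard step in (b) fails as written. To conclude $B\cap\mathcal S_{\epsilon/2}=\varnothing$ from $\underline F(B)>T$ you need $T\ge L^\ast+\epsilon/2$, i.e.\ an \emph{upper} bound on $L^\ast$. You use the Phase~1 \emph{lower} bound, and $T=\underline L+\epsilon/2\le L^\ast+\epsilon/2$. So the claim ``every $\theta$ in such a box has $L_{\texttt{fit}}(\theta)>L^\ast+\epsilon/2$'' does not follow. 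For instance, suppose Phase~1 returns $U=L^\ast$ and $\underline L=L^\ast-\epsilon/4$, which its certificate allows. Then any box with $\underline F(B)>L^\ast+\epsilon/4$ that still contains a point of loss at most $L^\ast+\epsilon/2$ is thrown away, and $\mathcal S_{\epsilon/2}\subseteq\mathcal C$ can fail.

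The fix is to prune against the incumbent, $\underline F(B)>U+\epsilon/2$. You already do this in (c), and \Cref{alg:viable-set-cover} does it with $L_{\texttt{fit}}(\hat u)+\epsilon/2$. Your accept test $\overline F(B)\le U+3\epsilon/4\le L^\ast+\epsilon$ is valid. With both tests in place, undecided boxes have gap $>\epsilon/4$, and your termination argument goes through. The paper instead accepts when $\overline L_{\texttt{fit}}(B)\le\underline L+\epsilon$ and runs Phase~1 at $\epsilon/3$, giving gap $>\epsilon/6$.

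\textbf{Further points.} In (a), the claim ``after finitely many splits every live box has diameter $\le r$'' is false. A live box whose lower bound never becomes the minimum is never split and can stay large. What you need, and what your final chain of inequalities already shows, is that the stopping test $U\le\underline F(B^\ast)+\epsilon$ holds as soon as the minimum-priority box $B^\ast$ has diameter at most $r$. Hence only boxes of depth below some finite $M$ are ever split, and there are finitely many of them; this is exactly the paper's contradiction argument.

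Second, $\Theta=\phi^{-1}(\mathcal U)$ is a box in $(E,C_i,r_i)$, not in your coordinates $(\log E,\log A_i,\alpha_i)$. The constraint $C_i=A_i(x_i^{\min})^{-\alpha_i}\in[\underline C_i,\overline C_i]$ reads $\log A_i-\alpha_i\log x_i^{\min}\in[\log\underline C_i,\log\overline C_i]$, a sheared region, so your bisection cannot be rooted at $\Theta$. Replacing $\log A_i$ by $\log C_i$ (equivalently, using the paper's $u$) fixes this. As a bonus, every training prediction is then monotone in each coordinate in a direction independent of $n$, so a single pair of corners serves all observations.

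Finally, the theorem is about Algorithms~1--3 as stated, and your variants change the thresholds. In (c) you also stop on $\omega(\mathrm{diam}\,B)\le\eta/2$, which is only implementable with an explicit modulus. The paper instead tests $\overline L_{\texttt{fit}}(B)\le\underline L+\epsilon+\eta$ directly and uses the Lipschitz gap bound only to prove termination.
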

The above theorem says that
\begin{enumerate}[label=(\alph*)]
    \item \Cref{alg:certified-fit} provably finds a parameter configuration within $\eps$ of the global minimum on the declared search domain, eliminating the optimizer suboptimality gap that heuristic methods cannot resolve.
    \item \Cref{alg:viable-set-cover} tightly bounds the entire $\epsilon$-sublevel set of near-minimizers, capturing which fits are consistent with small fitting loss.
    \item \Cref{alg:certified-prediction-bound} tightly bounds how much predictions can vary at an unobserved scale across all near-optimal fits, quantifying the extrapolation uncertainty arising from the Rashomon effect.
\end{enumerate}

In the rest of this section, we describe the algorithms and explain why they achieve these guarantees.
We first explain the general idea behind branch-and-bound algorithms for global optimization without reference to the scaling-law formulas.
We then show how a monotone reparameterization provides the bounds needed by a branch-and-bound algorithm and describe the fitting, prediction, and solution-set searches and their guarantees.
Complete details and proofs are given in \Cref{app:proofs}.


\subsection{Branch and Bound Global Optimization}
\label{sec:branch-and-bound}


The core idea behind branch-and-bound global optimization~\citep{horst1996global,lawler1966branch,floudas2000deterministic,hansen2003global} is to use inexpensive lower bounds on the fitting loss over parameter regions. Using these lower bounds we can either discard whole regions of the search space --- by showing that their lower bounds are worse than the best fit found so far --- or certify candidate solutions as nearly globally optimal --- by showing that the candidate solution has loss within $\epsilon$ of the best possible lower bound.
A region whose loss lower bound is already worse than the best fit we have found cannot contain the optimum, so we can discard it.
A region that might contain a better fit can be split into smaller regions to obtain a better lower bound. We can repeat these steps, while refining the best candidate solution, until the lower bound (across all regions) is within $\epsilon$ of the loss achieved by the best candidate solution.
Once this holds, since we know the best fitting loss must be higher than the lower bound, we can conclude that the candidate solution is within $\epsilon$ of achieving the best possible fitting loss.

\paragraph{Notation.} For a parameter box $B=\prod_{j=1}^d[a_j,b_j]\subseteq\R^d$, denote its componentwise lower and upper corners by $B^{\lowercorner}=(a_1,\ldots,a_d)$ and $B^{\uppercorner}=(b_1,\ldots,b_d)$, respectively. Their $j$-th coordinates are $B^{\lowercorner}_j=a_j$ and $B^{\uppercorner}_j=b_j$, and we write
\[
    B=[B^{\lowercorner},B^{\uppercorner}]
    :=\{\theta\in\R^d:B^{\lowercorner}_j\leq\theta_j\leq B^{\uppercorner}_j
    \text{ for every }j\}.
\]
In this subsection, $\theta$ denotes a parameter vector in the coordinates used for the search (temporarily overloading the previous notation). Branch and bound algorithms assume that we have access to some bound functions $\underline{L}_{\texttt{fit}}(B)$ and $\overline{L}_{\texttt{fit}}(B)$, that are easy to compute, for which we get:
$
    \underline{L}_{\texttt{fit}}(B)
    \leq L_{\texttt{fit}}(\theta;\mathcal D)
    \leq \overline{L}_{\texttt{fit}}(B)
$ for every $\theta\in B$.
We also maintain a candidate solution $\theta_B$ within each box $B$, satisfying $\underline{L}_{\texttt{fit}}(B)
    \leq L_{\texttt{fit}}(\theta_B;\mathcal D)
    \leq \overline{L}_{\texttt{fit}}(B)$.
Starting from an initial search domain $B_\texttt{global}$, we repeatedly split boxes and discard those whose lower bounds exceed the loss of the best candidate solution found so far.
Under some minor continuity assumptions, for appropriately chosen bound functions, the lower and upper bound functions $\underline{L}_{\texttt{fit}}(B), \overline{L}_{\texttt{fit}}(B)$ should converge to one another as we keep splitting, eventually guaranteeing that the variation in the fitting loss in each box is smaller than $\epsilon$, in which case the best candidate solution is within $\epsilon$ of the minimum.  The algorithm can terminate sooner than that, by returning a candidate solution once it is within $\epsilon$ of the lowest fitting loss lower bound across all regions.


\subsection{The Algorithms}
\label{sec:setup-monotonicity}

The only model-specific ingredients are what loss bounds we use for each box $B$.
Below, we show that a change of coordinates provides simple lower and upper bounds for the
standard scaling law \Cref{eq:chinchilla-power-law-form}.

\paragraph{Monotonic reparameterization.}
Recall the parametric power law family $ L(x;\theta)=E+\sum_{i=1}^k A_i x_i^{-\alpha_i}$.

For each input coordinate, let $x_i^{\min}=\min_n x_i^{(n)}$ be the coordinatewise minimum and define
\begin{equation}
    s_i(x)=\log\frac{x_i}{x_i^{\min}},\qquad
    C_i=A_i(x_i^{\min})^{-\alpha_i},\qquad
    r_i=e^{-\alpha_i}.
    \label{eq:certified-reparameterization}
\end{equation}
Writing $u=(E,C_1,\ldots,C_k,r_1,\ldots,r_k)$ gives the equivalent form
\begin{equation}
    L(x;u)=E+\sum_{i=1}^k C_i r_i^{s_i(x)}.
    \label{eq:monotone-scaling-law}
\end{equation}
Note that this is simply a bijective reparameterization of the original problem, which we denote by $u=\phi(\theta)$. Its inverse leaves $E$ unchanged and recovers $\alpha_i=-\log r_i$ and $A_i=C_i(x_i^{\min})^{\alpha_i}$.

The three certified algorithms below search a fixed compact hyperrectangle
\begin{equation}
\mathcal U
=
[\underline E,\overline E]
\times
\prod_{i=1}^k[\underline C_i,\overline C_i]
\times
\prod_{i=1}^k[\underline r_i,\overline r_i],
\label{eq:certified-search-domain}
\end{equation}
where $0<\underline E\leq\overline E<\infty$, $0<\underline C_i\leq\overline C_i<\infty$, and $0<\underline r_i\leq\overline r_i\leq1$ for all $i$. These will be the standing assumptions for all lemmas and guarantees below.\footnote{Our implementation also permits zero lower bounds, but those boundary cases are not covered by the proofs given here. We believe that the proofs extend to cover these cases, but we stick with our assumptions for simplicity.}
In particular, every prediction is at least $\underline E>0$, so log-predictions are well-defined.
The corresponding domain in the original parameters is
\begin{equation}
    \Theta=\phi^{-1}(\mathcal U).
    \label{eq:original-search-domain}
\end{equation}
Global guarantees are always relative to $\mathcal U$, or equivalently to $\Theta$.\footnote{Broad bounds in the original coordinates can be converted into a conservative enclosing box of this form. For example, \(A_i\in[A_i^-,A_i^+]\) and \(\alpha_i\in[\alpha_i^-,\alpha_i^+]\) imply \(r_i\in[e^{-\alpha_i^+},e^{-\alpha_i^-}]\) and $
C_i\in
\left[
A_i^-\min_{\alpha\in[\alpha_i^-,\alpha_i^+]}(x_i^{\min})^{-\alpha},
\;A_i^+\max_{\alpha\in[\alpha_i^-,\alpha_i^+]}(x_i^{\min})^{-\alpha}
\right].
$
The exact transformed image need not be a box, so this enclosure can enlarge the original search region. The guarantees then apply to the enlarged domain $\Theta=\phi^{-1}(\mathcal U)$.
}
For simplicity, we will write $L_{\texttt{fit}}(u)$ for $L_{\texttt{fit}}(\phi^{-1}(u))$, and similarly for predictions, where apparent from context. The change of coordinates preserves both the fitting loss and its minimum over the declared domain.

This reparameterization gives the scaling law in \Cref{eq:monotone-scaling-law} a useful monotonicity property.
It can be readily seen from \Cref{eq:monotone-scaling-law} that, since $s_i(x^{(n)})\geq0$, increasing any coordinate of $u$ can only increase the
prediction on a training point. Consequently, the two opposite corners of a parameter box bound every prediction made inside it: for any box $B\subseteq\mathcal U$, we have that $L(x^{(n)};B^{\lowercorner})\leq L(x^{(n)};u) \leq L(x^{(n)};B^{\uppercorner})$ for all $u\in B$.

\begin{lemma}[Monotone prediction map and corner enclosure]
\label{lem:monotonicity}
Let
\(
u=(E,C_1,\ldots,C_k,r_1,\ldots,r_k)
\)
be the reparameterized coordinates defined in \Cref{eq:certified-reparameterization}, so that predictions take the
form in \Cref{eq:monotone-scaling-law}. For every training input
\(x^{(n)}\) in \(\mathcal D\), the maps $u\mapsto L(x^{(n)};u)$ and $u\mapsto \log L(x^{(n)};u)$ are coordinatewise nondecreasing on \(\mathcal U\). Consequently, for every parameter box \(B=[B^{\lowercorner},B^{\uppercorner}]\subseteq\mathcal U\), \begin{equation}
    \log L(x^{(n)};B^{\lowercorner})
    \leq
    \log L(x^{(n)};u)
    \leq
    \log L(x^{(n)};B^{\uppercorner}) \qquad \forall u \in B.
    \label{eq:training-corner-enclosure}
\end{equation}

The same conclusion holds for any input \(x\) whenever \(x_i\geq x_i^{\min}\) for every coordinate \(i\).
For an arbitrary $x$, the prediction is still bounded by appropriate corners of \(B\), with the \(r_i\)-corner reversed whenever \(x_i<x_i^{\min}\); see \Cref{app:arbitrary-target-prediction-bounds}.
\end{lemma}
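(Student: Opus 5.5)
The plan is to check monotonicity term by term in the reparameterized form $L(x;u)=E+\sum_{i=1}^k C_i r_i^{s_i(x)}$, and then pass to the logarithm and to boxes. Fix a training input $x^{(n)}$ and write $s_i=s_i(x^{(n)})=\log(x_i^{(n)}/x_i^{\min})$. Since $x_i^{\min}$ is the coordinatewise minimum over the dataset, $s_i\geq 0$ for every $i$. This sign condition is all the argument really uses. It is also why the last claim of the lemma (any $x$ with $x_i\geq x_i^{\min}$) will follow verbatim.

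The monotonicity itself goes one coordinate at a time, on $\mathcal U$ where $E>0$, $C_i>0$ and $0<r_i\leq 1$:
\begin{itemize}
\item The map $E\mapsto L$ is affine with slope $1$, hence increasing.
\item The map $C_i\mapsto L$ is affine with slope $r_i^{s_i}>0$, hence increasing.
\item For $r_i$, the map $r\mapsto r^{s_i}$ on $(0,1]$ is nondecreasing because $s_i\geq 0$: it is constant when $s_i=0$ and strictly increasing when $s_i>0$, as seen from $r^{s}=e^{s\log r}$ with $\log$ increasing. Multiplying by $C_i>0$ preserves this.
\end{itemize}
Each coordinate enters only one summand, so the whole map is coordinatewise nondecreasing. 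Every prediction satisfies $L\geq E\geq \underline E>0$, so $\log L$ is well-defined. Composing with the increasing function $\log$ on $(0,\infty)$ keeps $u\mapsto\log L(x^{(n)};u)$ coordinatewise nondecreasing.

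For the corner enclosure, take $u\in B=[B^{\lowercorner},B^{\uppercorner}]\subseteq\mathcal U$. I would move from $B^{\lowercorner}$ to $u$ one coordinate at a time, through intermediate points $B^{\lowercorner}=v^{(0)},v^{(1)},\dots,v^{(2k+1)}=u$. Each step changes a single coordinate upward. All intermediate points stay in $B$, since $B$ is a product of intervals, and hence in $\mathcal U$. Coordinatewise monotonicity then gives a chain of inequalities $\log L(x^{(n)};B^{\lowercorner})\leq\log L(x^{(n)};u)$. The same chain from $u$ to $B^{\uppercorner}$ gives the upper bound, which proves \Cref{eq:training-corner-enclosure}.

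There is no genuine obstacle here. The one point to check carefully is the $r_i$ coordinate: monotonicity there depends on the sign of $s_i$, and the case $s_i=0$ (where $r^0=1$) must be allowed. This is exactly why $x^{\min}$ is taken as the data minimum. For an arbitrary $x$ with some $x_i<x_i^{\min}$, we get $s_i<0$, and the same computation shows $r\mapsto r^{s_i}$ is nonincreasing. The enclosure then holds with the $r_i$-corners swapped, as referenced in the appendix.
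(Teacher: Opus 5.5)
Your proof is correct and takes the same approach as the paper: $s_i(x^{(n)})\geq 0$ makes each summand $C_i r_i^{s_i}$ nondecreasing in $C_i$ and $r_i$, and adding $E$ and taking $\log$ (valid since $L\geq\underline E>0$) preserve this. Your coordinate-by-coordinate path from $B^{\lowercorner}$ to $u$ to $B^{\uppercorner}$, and the sign flip for $s_i<0$, just spell out steps the paper leaves implicit.
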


\Cref{lem:monotonicity} follows directly from \Cref{eq:certified-reparameterization}: \(s_i(x^{(n)})\geq0\), and hence every term \(C_i r_i^{s_i(x^{(n)})}\) in \Cref{eq:monotone-scaling-law} is nondecreasing in both \(C_i\) and \(r_i\). Adding \(E\) preserves the monotonicity.

\paragraph{Monotonicity gives simple fitting loss bounds.}   
Let $B=[B^{\lowercorner},B^{\uppercorner}]$.  On input $x^{(n)}$ in the dataset, the ``corner'' predictions define the interval
\begin{equation}
    p_n^-(B)=\log L(x^{(n)};B^{\lowercorner}),\qquad
    p_n^+(B)=\log L(x^{(n)};B^{\uppercorner}),\qquad
    I_n(B)=[p_n^-(B),p_n^+(B)].
\end{equation}
By \Cref{lem:monotonicity}, all predictions made with parameters $u \in B$ at $x^{(n)}$ must lie within $I_n(B)$.
Let $t_n=\log y^{(n)}$ and define the \emph{unavoidable residual} $d_n(B)=\operatorname{dist}(t_n,I_n(B))$ (where $\operatorname{dist}(t,[a,b])=\max\{a-t,0,t-b\}$): the smallest error observation $n$ could possibly attain anywhere in the box.
Write $\operatorname{Huber}_\delta$ for the Huber loss used by the canonical log-residual objective.  In the transformed coordinates, the same fitting loss is
\begin{equation}
    L_{\texttt{fit}}(u)
    =\frac{1}{N_{\mathrm{obs}}}\sum_{n=1}^{N_{\mathrm{obs}}}
      \operatorname{Huber}_\delta\bigl(\log L(x^{(n)};u)-t_n\bigr).
    \label{eq:fitting-loss-reparameterized}
\end{equation}
If $t_n$ lies inside $I_n(B)$, some prediction in the box could match that observation exactly. If it lies outside, the distance to the interval is an error that \emph{every} fit in the box must incur. Combining the corresponding unavoidable losses gives the box-loss lower bound
\begin{align}
    \underline{L}_{\texttt{fit}}(B)
    &=\frac{1}{N_{\mathrm{obs}}}\sum_{n=1}^{N_{\mathrm{obs}}} \operatorname{Huber}_\delta\!\left(d_n(B)\right).
      \label{eq:box-loss-lower-bound}
\end{align}
This lower bound can be conservative: the best residual for one observation and the best residual for another need not come from the same parameters. However, as we refine the boxes, the bound becomes tighter and tighter.

We can also define a corresponding \emph{uniform box-loss upper bound}. Since the Huber loss is convex, its maximum over the residual interval \([p_n^-(B)-t_n,p_n^+(B)-t_n]\) is attained at an endpoint. Therefore,
\begin{equation}
    \overline L_{\texttt{fit}}(B)
    =
    \frac{1}{N_{\mathrm{obs}}}\sum_{n=1}^{N_{\mathrm{obs}}}
    \max\left\{
        \operatorname{Huber}_{\delta}
        \bigl(p_n^-(B)-t_n\bigr),
        \operatorname{Huber}_{\delta}
        \bigl(p_n^+(B)-t_n\bigr)
    \right\}.
    \label{eq:box-loss-upper-bound}
\end{equation}
Thus, we can get that for every $u\in B$,
\begin{equation}
    \underline{L}_{\texttt{fit}}(B)\leq L_{\texttt{fit}}(u)\leq\overline{L}_{\texttt{fit}}(B).
    \label{eq:box-loss-main-sandwich}
\end{equation}
The gap in these bounds converges to zero with the box diameter: 
as every side length of \(B\) tends to zero, $\overline L_{\texttt{fit}}(B) - \underline L_{\texttt{fit}}(B) \rightarrow 0$.
See \Cref{lem:consistent-loss-enclosure} for details.

We also note that nothing in the above formulation is specific to the Huber loss; substituting most other residual losses is straightforward. The approach can also be made to generalize beyond this exact power-law family. The monotone reparameterization can be adapted to fit many types of scaling laws (or alternatively, other approaches to obtain a cheap lower and upper bound could be employed).


\paragraph{Branch-and-bound scaling-law fitting algorithm.}
Having defined how we can obtain upper and lower bounds $\underline{L}_{\texttt{fit}}(B),\overline{L}_{\texttt{fit}}(B)$ for each region $B$ in the reparameterized space, we can give the complete search algorithm in \Cref{alg:certified-fit}.
The algorithm keeps track of the best loss achieved at a tested point and the loss lower bound of each region still in play.
The difference between the lowest lower bound and the best loss achieved so far tells us how much an undiscovered fit could still improve the answer.
There are two subroutines in \Cref{alg:certified-fit} that we have yet to describe. The first one is $\textsc{SampleCandidate}(B)$, which chooses a point within $B$ to evaluate as a candidate point (for example, its center). In our implementation, since we are already evaluating the exact predictions at the upper and lower corners of $B$ and can compute the fitting loss very cheaply, we use both of these points as candidates for each region, returning the one with the lower fitting loss.
The second one is $\textsc{BranchRegion}(B)$, which returns two disjoint boxes that partition $B$.
In \Cref{alg:certified-fit} and the prediction-bound searches, $\textsc{BranchRegion}(B)$ bisects $B$ at the midpoint of its widest side, $\arg\max_j(B^{\uppercorner}_j-B^{\lowercorner}_j)$; in \Cref{alg:viable-set-cover}, each side width is first divided by the corresponding width of $\mathcal U$.\footnote{
More efficient splitting rules exist, and we expect the choice of rule to be one of the main levers for reducing runtime in practice.
}



\begin{algorithm}[t]
\SetAlgoLined
\KwIn{Search domain hyperrectangle $\mathcal{U}$; desired optimality gap $\epsilon$; scaling law dataset $\calD$}
\KwOut{Best tested fit $\hat{u}$ satisfying $L_{\texttt{fit}}(\hat{u}) \leq \inf_{u \in \mathcal{U}} L_{\texttt{fit}}(u) + \epsilon$; tightest estimate of the loss lower bound.}

$\hat{u} \gets \textsc{SampleCandidate}(\mathcal{U})$ \tcp*{Initialize candidate parameters}
Initialize a min-priority queue $\mathcal{Q}$\;
$\mathcal{Q}.\textsc{Push}(\mathcal{U}, \text{priority}=\underline{L}_{\texttt{fit}}(\mathcal{U}))$ \tcp*{Active regions ordered by loss lower bound}

\While{$\mathcal{Q}$ is not empty}{
    $\underline{L} \gets \mathcal{Q}.\textsc{PeekMinPriority}()$ \tcp*{Lowest possible loss among active regions}
    \If(\tcp*[f]{Current best fit is guaranteed to be $\epsilon$-optimal}){$L_{\texttt{fit}}(\hat{u}) \leq \underline{L} + \epsilon$}{\KwRet $\hat{u},\min\{\underline{L},\ L_{\texttt{fit}}(\hat u)\} $\; }

    $B \gets \mathcal{Q}.\textsc{PopMin}()$ \tcp*{Extract region with the smallest lower bound}

    \ForEach(\tcp*[f]{Bisect the widest side of $B$}){child box $C \in \textsc{BranchRegion}(B)$}{ 
        $u_C \gets \textsc{SampleCandidate}(C)$\;
        \If(\tcp*[f]{Found a better candidate fit}){$L_{\texttt{fit}}(u_C) < L_{\texttt{fit}}(\hat{u})$}{ 
            $\hat{u} \gets u_C$\;
        }

        Compute child lower bound $\underline{L}_{\texttt{fit}}(C)$\;
        \If(\tcp*[f]{Discard regions that cannot improve the best fit}){$\underline{L}_{\texttt{fit}}(C) < L_{\texttt{fit}}(\hat{u})$}{ 
            $\mathcal{Q}.\textsc{Push}(C, \text{priority}=\underline{L}_{\texttt{fit}}(C))$\;
        }
    }
}
\KwRet $\hat{u}, L_\texttt{fit}(\hat{u})$\;
\caption{Certified fitting search}
\label{alg:certified-fit}
\end{algorithm}

We illustrate one step of the algorithm on a toy problem in \Cref{fig:certified-bb-subdivision-step}. The figure shows how splitting a box in parameter space refines the prediction and fitting-loss bounds.

\begin{figure*}[t]
    \centering
    \includegraphics[width=\linewidth]{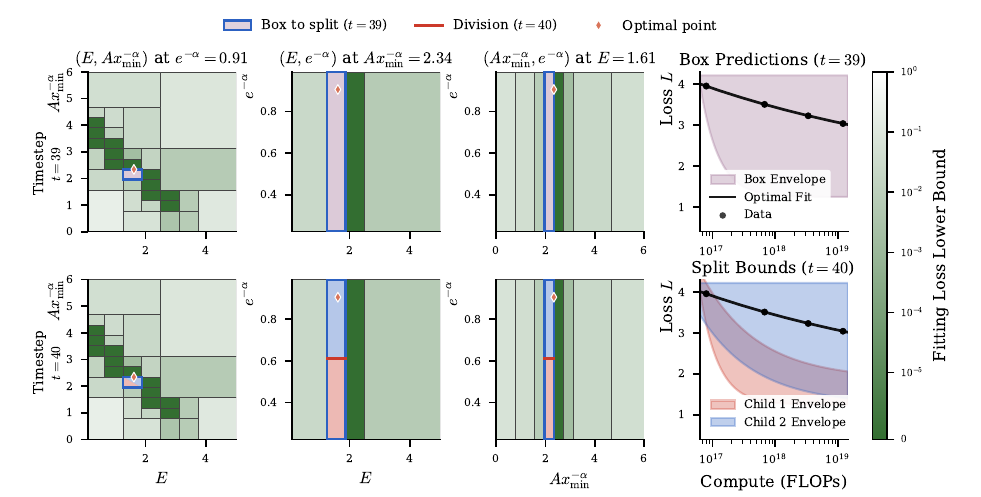}
        \caption{\textbf{Visualization of a branch-and-bound step.} 
    This figure illustrates a single iteration of the certified search \Cref{alg:certified-fit} in the monotonic coordinate space $(E, A x_{\min}^{-\alpha}, e^{-\alpha})$. 
    \textbf{(Top Row)} A parent bounding box with the lowest loss lower bound is selected for splitting. Background heatmaps show the guaranteed loss lower bound at this step. \textbf{(Top Row Right)} The corresponding scaling law predictions for parameters inside the box to be split.
    \textbf{(Bottom Row)} The algorithm branches the box along the exponent dimension $e^{-\alpha}$. This produces two new boxes: Child 1 and Child 2. 
    \textbf{(Bottom Row Right)} The corresponding scaling law predictions for parameters inside the two new child boxes. The algorithm can discard Child 1 because its fitting-loss lower bound over the training grid strictly exceeds the best loss found so far, recursively narrowing the search space around a true optimum.}
    \label{fig:certified-bb-subdivision-step}
\end{figure*}

\paragraph{Measuring suboptimality.} \Cref{alg:certified-fit} bounds not only the loss of the returned incumbent, but also how much better any undiscovered fit could be.
Concretely, for the returned incumbent $\hat{u}$ and tightest fitting loss lower bound $\underline{L}$, $L_\texttt{fit}(\hat{u})$ upper-bounds the true minimum loss, and $\underline{L}$ lower-bounds it, to within $\epsilon$ tightness. By running with a small $\epsilon$, we can use these bounds to estimate the ``suboptimality'' of fits returned from any other fitting algorithm (e.g., Chinchilla). Concretely, a fit $u$ returned by another algorithm is at least $\max\{0,\,L_\texttt{fit}(u)-L_\texttt{fit}(\hat{u})\}$ and at most $L_\texttt{fit}(u)-\underline{L}$ suboptimal.
This allows us to evaluate candidates returned by an uncertified optimizer, such as Chinchilla Approach~3, without knowing the exact globally optimal loss. In \Cref{fig:suboptimality-vs-runtime}, we use the tightest certified interval obtained by our branch-and-bound search as a common reference, and compare candidates returned by both methods on $4$ real scaling ladders (see \Cref{app:datasets} for dataset details).

\paragraph{Runtime.} We can loosely bound the runtime and memory complexity of \Cref{alg:certified-fit}.
The analysis in \Cref{remark:certified-fitting-complexity} gives an $\mathcal O(\epsilon^{-d})$ worst-case bound on the number of branching iterations as $\epsilon\to0$, for fixed $d=2k+1$.
Under mild conditions on how the problem scales with $k$ (\Cref{remark:certified-fitting-complexity-in-dimension}), the bound is at most $(Cd/\epsilon)^d$ for a constant $C$ independent of $d$ and $\epsilon$.
This deliberately conservative bound corresponds to exploring a complete subdivision cover of the space, and does not account for regions eliminated by pruning. 

Empirically, \Cref{fig:suboptimality-vs-runtime} shows the suboptimality of the solutions returned from the branch-and-bound search vs. runtime, as compared to the Chinchilla Approach~3 (while varying the initialization grid for Chinchilla). Increasing the initialization-grid density of Chinchilla Approach~3 can also produce better candidates, but does \textit{not} give any guarantees on the returned solution or guarantee that it will improve upon one from a coarser grid.
The wall-clock comparison should be interpreted as illustrative rather than as an implementation-normalized benchmark: our branch-and-bound implementation is sequential and implemented in \texttt{C++}; on the other hand, the Chinchilla baseline runs SciPy L-BFGS sequentially from every initialization.
The Chinchilla algorithm can be readily parallelized without changing the fitting procedure; parallel processing of branch-and-bound boxes is also possible, but is not part of \Cref{alg:certified-fit} as stated. Hence, in practice, Chinchilla can be run much more efficiently than \Cref{fig:suboptimality-vs-runtime} might suggest. Nonetheless, \Cref{fig:suboptimality-vs-runtime} speaks to the tractability of running \Cref{alg:certified-fit} efficiently in practice.

\paragraph{Memory.}
We can also loosely upper-bound the memory complexity of the algorithm, knowing its iteration complexity.
After $T$ branching iterations, the queue contains at most $T+1$ boxes (each iteration increases the queue size by at most one). Hence,  the iteration bound of \Cref{remark:certified-fitting-complexity}, $\mathcal O(\epsilon^{-d})$ for fixed $d$, also bounds the memory complexity of the algorithm. 
This is even more conservative a bound than for runtime, since many boxes might be discarded, and so the actual peak number of stored boxes might be much smaller than the total number of iterations.

%
\begin{figure}
    \centering
    \includegraphics[width=1.0\textwidth]{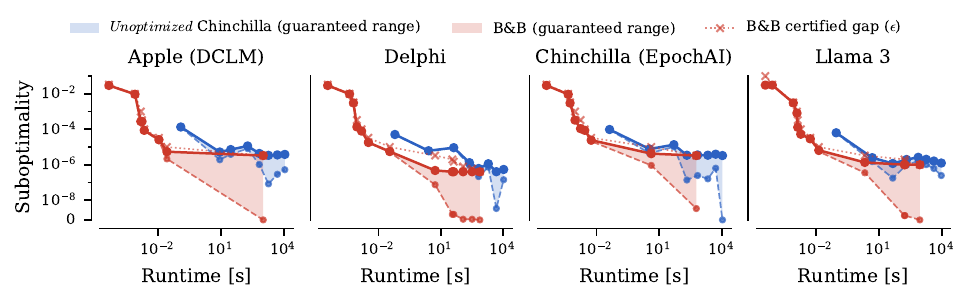}
    \caption{\textbf{Certified suboptimality of Chinchilla Approach~3 vs. Branch and Bound \Cref{alg:certified-fit} against wall-clock runtime.}
    Comparison of certified branch-and-bound fitting (B\&B; red) and Chinchilla Approach~3 (blue) on four 5-parameter scaling-law datasets.
    For a solution returned by a fitting method with fitting loss $L$, shaded regions show the guaranteed interval within which the ``suboptimality'' $L - \min_\theta L_\texttt{fit}(\theta; \calD)$ must lie (as estimated using the tightest bounds obtained from the branch-and-bound search).
    Chinchilla points give the best candidate obtained using different $K^5$ initialization-grid densities for $K=1,\dots,8$. Runtimes are wall-clock measurements from sequential implementations: C++ for B\&B and SciPy L-BFGS for Chinchilla Approach~3.  The runtime comparison is only illustrative as the two implementations have not been optimized to an equal extent.
    }
    \label{fig:suboptimality-vs-runtime}
\end{figure}



\subsection{Certified Sublevel Sets Algorithm}
\label{sec:viable-set-algorithm}

A certified optimizer returns one nearly optimal fit but does not reveal whether the data distinguish that fit from other parameter settings.
Instead of asking for just one fit, we can also ask explicitly for some representation of the entire ``viable'' set $\mathcal S_\epsilon$ of parameters in $\Theta$ that are optimal to within an $\epsilon$-threshold. In the transformed coordinates, this set is
\begin{equation}
    \mathcal S^\phi_\epsilon
    \coloneqq\phi(\mathcal S_\epsilon)
    =\{u\in\mathcal U:L_{\texttt{fit}}(u;\mathcal D)
    \leq\min_{u'\in\mathcal U}L_{\texttt{fit}}(u';\mathcal D)+\epsilon\}.
    \label{eq:transformed-sublevel-set}
\end{equation}
Returning this exact set is intractable; instead, in \Cref{alg:viable-set-cover}, we give an algorithm that constructs a collection of boxes $\mathcal B$ satisfying $\mathcal S^\phi_{\epsilon/2}\subseteq\bigcup_{B\in\mathcal B}B\subseteq\mathcal S^\phi_\epsilon$. Thus, the cover contains every fit within $\epsilon/2$ of optimal, while every parameter it contains is guaranteed to be within $\epsilon$ of optimal.\footnote{The algorithm can be generalized to an arbitrary inner fitting-loss tolerance $\rho\geq0$ and slack $\eta>0$, where the returned box cover satisfies $\mathcal S^\phi_{\rho}\subseteq\bigcup_{B\in\mathcal B}B\subseteq\mathcal S^\phi_{\rho+\eta}$. The stated guarantee corresponds to $\rho=\eta=\epsilon/2$.}
Mapping the union back to the original parameters gives
\begin{equation}
    \mathcal C\coloneqq\phi^{-1}\!\left(\bigcup_{B\in\mathcal B}B\right),
    \qquad
    \mathcal S_{\epsilon/2}\subseteq\mathcal C\subseteq\mathcal S_\epsilon,
    \label{eq:original-sublevel-cover}
\end{equation}
as stated in \Cref{thm:main}(b). The mapped regions $\phi^{-1}(B)$ will generally not be boxes.

\begin{algorithm}[!t]
\SetAlgoLined
\KwIn{Search domain $\mathcal U$; dataset $\mathcal D$; viable tolerance $\epsilon>0$}
\KwOut{Box cover $\mathcal B$ such that
$\mathcal S^\phi_{\epsilon/2}\subseteq\bigcup_{B\in\mathcal B}B\subseteq\mathcal S^\phi_\epsilon$,
where
$\mathcal S^\phi_\epsilon
=
\{u\in\mathcal U:
L_{\texttt{fit}}(u;\mathcal D)
\leq
\inf_{u'\in\mathcal U}L_{\texttt{fit}}(u';\mathcal D)+\epsilon\}$}

\tcp{Phase 1: Certify the global minimum to within $\frac{\epsilon}{3}$}
$(\hat u,\underline L)
\gets
\textsc{CertifiedFittingSearch}
(\mathcal U,\epsilon/3,\mathcal D)$ \tcp*[f]{Get estimate $\underline{L}$ within $\frac{\epsilon}{3}$ of optimal}\;

\tcp{Phase 2: Construct the viable-set cover}
Initialize a queue $\mathcal Q\gets\{\mathcal U\}$\;
Initialize the viable cover $\mathcal B\gets\emptyset$\;

\While{$\mathcal Q$ is not empty}{
    $B\gets\mathcal Q.\textsc{Pop}()$\;
    Compute $\underline L_{\texttt{fit}}(B)$
    and $\overline L_{\texttt{fit}}(B)$\;

    \If(\tcp*[f]{$B$ contains no point of $\mathcal S^\phi_{\epsilon/2}$}){
        $\underline L_{\texttt{fit}}(B)
        >
        L_{\texttt{fit}}(\hat u;\mathcal D)+\epsilon/2$
    }{
        \textbf{continue}\;
    }

    \If(\tcp*[f]{$B$ is fully contained in $\mathcal S^\phi_\epsilon$}){
        $\overline L_{\texttt{fit}}(B)
        \leq
        \underline L+\epsilon$
    }{
        $\mathcal B\gets\mathcal B\cup\{B\}$\;
        \textbf{continue}\;
    }

    \ForEach{child box $C\in\textsc{BranchRegion}(B)$}{
        $\mathcal Q.\textsc{Push}(C)$\;
    }
}

\KwRet $\mathcal B$\;
\caption{Certified sublevel set cover}
\label{alg:viable-set-cover}
\end{algorithm}

\Cref{fig:sivia-viable-set} illustrates the resulting cover.  The left panels show that the near-optimal set is an elongated, curved region in parameter space.  The right panel maps this region into the associated family of scaling law curves which diverges as the scale $x$ grows.  We use a large tolerance in this figure so that the shape of the set is visible; Figure~\ref{fig:non-identifiability-scaling-law} shows that such divergence persists at tolerances within 1\% of the optimal fitting loss. This motivates the prediction-envelope search in the next section.

\begin{figure}[!t]
    \centering
    \includegraphics[width=\textwidth]{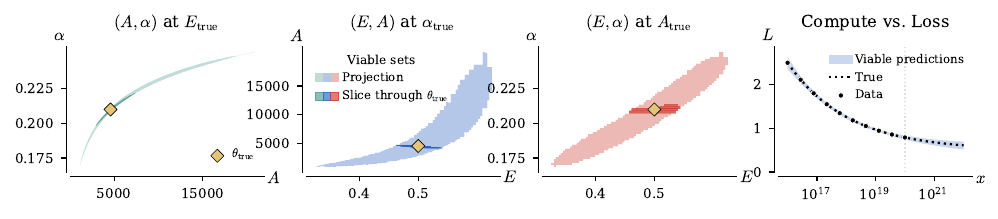}
    \caption{\textbf{Sublevel set cover and prediction envelope from certified search.}
    Illustration of the output of the sublevel set cover algorithm (\Cref{alg:viable-set-cover}) on a synthetic single-variable scaling law $L(x;\theta) = E + A x^{-\alpha}$, fitted with the log-space Huber loss of \Cref{sec:chinchilla-fitting} ($\delta = 10^{-3}$), at inner and outer fitting-loss tolerances $\epsilon/2 = 10^{-5}$ and $\epsilon = 2\times10^{-5}$.  The certified minimum fitting loss is approximately $2.3\times 10^{-7}$, so $\epsilon$ is roughly 90 times the optimum.  The algorithm returns a collection of boxes $\mathcal B$ in $(E,C,r)$ coordinates. Its mapped union $\mathcal C=\phi^{-1}(\bigcup_{B\in\mathcal B}B)$ satisfies $\mathcal S_{\epsilon/2}\subseteq\mathcal C\subseteq\mathcal S_\epsilon$ in the original parameters.
    \textbf{(Left three panels)} Two-dimensional projections of $\mathcal C$ onto pairs of original coordinates $(E,A,\alpha)$, shown as light-shaded regions, and cross-sections with the omitted coordinate fixed at its ground-truth value, shown as dark-shaded regions. The diamonds mark the ground-truth parameters $\theta_{\mathrm{true}}$.
    \textbf{(Right panel)} The corresponding prediction envelope: the shaded band shows the range of scaling curves $L(x;\theta)$ achievable by any $\theta\in\mathcal C$.
    }
    \label{fig:sivia-viable-set}
\end{figure}

\paragraph{Runtime \& memory.} The worst-case bounds for \Cref{alg:certified-fit} also apply to \Cref{alg:viable-set-cover}, up to constants: a box is subdivided only while its loss gap exceeds $\epsilon/6$, so the subdivision tree has depth $\mathcal O(d\log(1/\epsilon))$, and Phase~2 performs $\mathcal O(\epsilon^{-d})$ iterations for fixed $d$ (\Cref{app:proofs}).
In practice, however, \Cref{alg:viable-set-cover} is far more expensive. \Cref{alg:certified-fit} stops as soon as its incumbent is within $\epsilon$ of the smallest remaining lower bound, and pruning typically means that it explores only a small part of the tree. \Cref{alg:viable-set-cover} must instead classify every box as inside $\mathcal S^\phi_\epsilon$ or outside $\mathcal S^\phi_{\epsilon/2}$, so it refines every box that straddles the boundary of the sublevel set until its loss gap is at most $\epsilon/6$.
That the two algorithms share a worst-case bound but differ greatly in observed runtime indicates how conservative the bound is.
In practice, we find that we can run this algorithm for \textit{reasonable} choices of $\epsilon$, but not for thresholds comparable to the suboptimality of solutions from Chinchilla Approach~3.

\subsection{Certified Prediction Envelopes}\label{sec:certified-prediction-envelopes}

The certified sublevel set cover characterizes all nearly optimal parameter settings, but it can be computationally expensive. Ultimately, when using scaling laws in practice for extrapolation, we primarily care about the resulting uncertainty in scaling law \textit{predictions} that ambiguous fitting causes. It turns out that, rather than trying to characterize the entire viable set $\mathcal{S}^\phi_\epsilon$, we can instead much more cheaply characterize just the range of predictions at a given input $x^\ast$ that the viable set would produce.


Concretely, fix a target fitting-loss tolerance $\epsilon>0$, a loss-tolerance slack $\eta>0$, and a target input $x^\ast$. For any fitting-loss tolerance $\rho\geq0$, define
\[
    \mathcal L_{x^\ast,\rho}
    \coloneqq
    \{L(x^\ast;u):u\in\mathcal S^\phi_\rho\}
    =\{L(x^\ast;\theta):\theta\in\mathcal S_\rho\}.
\]
Below, we use this definition at two tolerance levels: $\rho=\epsilon$ and $\rho=\epsilon+\eta$.
Our goal is to search for an interval that contains every prediction arising from $\mathcal S^\phi_\epsilon$;
we do not want the interval to be too loose, so we also require that the endpoints be attained by parameters in the slightly enlarged set $\mathcal S^\phi_{\epsilon+\eta}$.
For instance, for the upper endpoint, this means returning $\overline L_{x^\ast}$ satisfying \( \sup\mathcal L_{x^\ast,\epsilon}
    \leq
    \overline L_{x^\ast}
    \leq
    \sup\mathcal L_{x^\ast,\epsilon+\eta}
\).

\Cref{alg:certified-prediction-bound} computes such an upper endpoint directly.
It first runs \Cref{alg:certified-fit} with optimality tolerance $\eta/2$ to obtain certified lower and upper bounds on the globally optimal fitting loss.
We then run a branch and bound loop that greedily prioritizes boxes with the largest possible prediction at the target input $x^\ast$.
By using the monotonic reparameterization, we know that the largest prediction at $x^\ast$ for this parameter box will be attained in one of the corners.
We can discard boxes if they are completely disjoint from $\mathcal{S}^\phi_\epsilon$ (the fitting-loss lower bound determines whether $B$ could contain a fit in $\mathcal S^\phi_\epsilon$), or subdivide them further if that is still unclear.
It terminates when the highest-priority remaining box is certified to lie entirely within $\mathcal S^\phi_{\epsilon+\eta}$.
Its prediction-maximizing corner then upper-bounds every prediction arising from $\mathcal S^\phi_\epsilon$, while itself belonging to $\mathcal S^\phi_{\epsilon+\eta}$.
The corresponding algorithm for a \textit{lower bound} on the prediction can be obtained with minor adjustments to \Cref{alg:certified-prediction-bound}.\footnote{Phase~1 can be shared between the lower and upper envelope searches.}


For clarity, below we assume that we are only interested in predictions at points $x^\ast$ for which $x_i^\ast\geq x_i^{\min}$ for every coordinate, so that the upper corner of $B$ maximizes prediction over $u \in B$.
This is not strictly necessary --- the general algorithm, which selects the appropriate corner separately for each coordinate, is given in \Cref{app:arbitrary-target-prediction-bounds,alg:certified-prediction-bound-general} --- but it simplifies the exposition.

\begin{algorithm}[t]
\SetAlgoLined
\KwIn{
Search domain $\mathcal U=[\mathcal U^{\lowercorner}, \mathcal U^{\uppercorner}]$; dataset $\mathcal D$; target scale $x^\ast$;
inner loss tolerance $\epsilon>0$; loss tolerance slack $\eta>0$;
}
\KwOut{Prediction bound $\overline L_{x^\ast}$ satisfying
\[
    \sup\mathcal L_{x^\ast,\epsilon}
    \leq
    \overline L_{x^\ast}
    \leq
    \sup\mathcal L_{x^\ast,\epsilon + \eta} \qquad
    \text{where } \mathcal L_{x^\ast,\epsilon} \coloneqq \{L(x^\ast; u): u \in \mathcal{S}^\phi_{\epsilon}\}.
\]
}

\tcp{Phase 1: certify the globally optimal fitting loss}
$(\hat u,\underline L) \gets \textsc{CertifiedFittingSearch}
(\mathcal U,\eta/2,\mathcal D)$\;
$L_{\mathrm{fit}}^{\mathrm{ub}}
\gets
L_{\texttt{fit}}(\hat u;\mathcal D)$
\tcp*[f]{We have 
$\underline L
\leq L_{\texttt{fit}}^\star
\leq L_{\mathrm{fit}}^{\mathrm{ub}}
\leq \underline L+\eta/2$
}

\tcp{Phase 2: search for the largest viable prediction}
Initialize a max-priority queue $\mathcal Q$\;
$\mathcal Q.\textsc{Push}
\bigl(
    \mathcal U,
    \text{priority}=L(x^\ast;\mathcal U^{\uppercorner})
\bigr)$
\tcp*[f]{Queue ordered in terms of max prediction at $x^\ast$}
\;

\While{$\mathcal Q$ is not empty}{
    $B\gets\mathcal Q.\textsc{PopMax}()$
    \tcp*[f]{Box with the largest remaining prediction upper bound}\;

    Compute $\underline L_{\texttt{fit}}(B)$
    and $\overline L_{\texttt{fit}}(B)$\;


    \If(\tcp*[f]{$B$ is entirely contained in
        $\mathcal S^\phi_{\epsilon+\eta}$}){
        $\overline L_{\texttt{fit}}(B)
        \leq
        \underline L+\epsilon+\eta$
    }{
        \KwRet $L(x^\ast;B^{\uppercorner})$\;
    }
    \ForEach{child box $C\in\textsc{BranchRegion}(B)$}{
        Compute $\underline L_{\texttt{fit}}(C)$\;
    
        \If{
            $\underline L_{\texttt{fit}}(C)
            \leq
            L_{\mathrm{fit}}^{\mathrm{ub}}+\epsilon$
        }{
            $\mathcal Q.\textsc{Push}
            \bigl(
                C,
                \text{priority}=L(x^\ast;C^{\uppercorner})
            \bigr)$\;
        }
    }
}
\caption{Certified upper prediction bound with viability slack}
\label{alg:certified-prediction-bound}
\end{algorithm}

Running \Cref{alg:certified-prediction-bound} for the upper and lower bound returns an interval containing the range of predictions at a given scale $x^\ast$, and repeating this process across many target scales can produce a prediction envelope, such as that seen in \Cref{fig:non-identifiability-scaling-law}.  The resulting band is a pointwise, certified range over viable fits: every viable scaling curve lies inside it, while each endpoint is attained by a fit in $\mathcal S^\phi_{\epsilon+\eta}$.

\paragraph{Runtime \& Memory.}  The same iteration and queue-memory bounds can again be given as for \Cref{alg:certified-fit,alg:viable-set-cover}, using the arguments in \Cref{remark:certified-fitting-complexity,remark:certified-fitting-complexity-in-dimension} with accuracy parameter $\eta/2$ for both phases. Namely, for fixed $d$, the algorithm performs $\mathcal O(\eta^{-d})$ branching iterations and stores $\mathcal O(\eta^{-d})$ boxes as $\eta\to0$.
The bound does not depend on $\epsilon$. A box is subdivided only if it passes the pruning test $\underline L_{\texttt{fit}}(B)\le L^{\mathrm{ub}}_{\mathrm{fit}}+\epsilon$ but fails the acceptance test, i.e.\ $\overline L_{\texttt{fit}}(B)>\underline L+\epsilon+\eta$. Subtracting, $\epsilon$ cancels, and the loss gap of the box exceeds $\eta-(L^{\mathrm{ub}}_{\mathrm{fit}}-\underline L)\ge\eta/2$ by the Phase~1 certificate. Thus $\eta$ alone determines the depth of the subdivision tree.
Because boxes are prioritized by their largest possible prediction, we find empirically that \Cref{alg:certified-prediction-bound} runs much faster than the sublevel set search in \Cref{sec:viable-set-algorithm} (\Cref{alg:viable-set-cover}).
In practice, \Cref{alg:certified-prediction-bound} is fast for three-parameter scaling laws: for the envelope in \Cref{fig:non-identifiability-scaling-law}, at a tolerance of $1\%$ of the optimal loss, each endpoint took under a minute to certify (\Cref{sec:experimental-details}).
The five-parameter laws in \Cref{fig:viable-extrap-fans-all} are more demanding: there, the Chinchilla fits are within $10^{-5}$ of the optimal loss, and certifying envelopes at such small tolerances is beyond our current compute budget. We therefore construct the bands in \Cref{fig:viable-extrap-fans-all} by local search (\Cref{sec:experimental-details}).

\subsection{Limitations and Future Work}
\label{sec:limitations}
\paragraph{Computational cost.}
Certifying global optimality is computationally hard for nonconvex problems in general, and the worst-case cost of our algorithms grows exponentially in the number of parameters. Certified fitting is practical for the scaling laws we consider, but certified prediction envelopes for five-parameter laws at tight tolerances currently exceed our compute budget, which is why \Cref{fig:viable-extrap-fans-all} relies on local search.
Scaling law fitting has more structure than a generic nonconvex problem: the number of parameters is small, and predictions are monotone after reparameterization. Understanding the complexity of this specific problem, and in particular how it depends on the scaling ladder, is an interesting open question. On the practical side, adaptive splitting rules, tighter loss bounds, and parallel implementations could substantially reduce runtime and memory.

\paragraph{Randomness in the observations.}
In the scaling ladders we consider, each observation is the validation loss of a single training run, with one initialization seed and one data seed, averaged over a fixed evaluation set. Each observation is therefore the realization of a random variable, but our analysis, like standard fitting practice, applies the fitting procedure to a single realization of the ladder without accounting for this randomness. The randomness can change the outcome of the fitting procedure, including the fitted law and its sublevel set. The choice of statistic matters as well: the loss of a single run, its average over seeds, and other summaries of the per-example losses need not follow the same scaling law. Understanding how these sources of variability enter scaling law estimation is an important direction for future work.

\paragraph{Choice of objective.}
We follow standard practice in minimizing a log-residual Huber loss. Our bounds can be extended to most other residual losses, with some minor adjustments to the above algorithms, but which objective is most appropriate for scaling law fitting, and how it should reflect the noise in the observations, remains open.

\paragraph{Loss landscape.}
The sublevel sets we find are broad, and the predictions they imply diverge substantially. Understanding the geometry of these loss landscapes, and in particular which features of a scaling ladder make the sublevel set broad or narrow, would clarify when extrapolations can be trusted and could inform how scaling ladders are designed.

\section{Related Work}
\label{sec:related-work}

\paragraph{Scaling laws and fitting.}
Power laws have long been used to describe how neural-network performance varies with model size, dataset size, and training compute. In language modeling, \citet{kaplan2020scaling} proposed a coupled model--data scaling surface $L((N, D);\theta) = \left(A_N /N^{\alpha_N / \alpha_D} + A_D / D \right)^{\alpha_D}$, while \citet{hoffmann2022training} introduced the now widely used additive form
\(
L(N,D)=E+A N^{-\alpha}+B D^{-\beta}.
\)
Subsequent work has applied or extended these families to new model suites, downstream metrics, data mixtures, distillation, and multimodal or diffusion settings, sometimes adding further covariates or a coupling exponent~\citep{gadre2025gemstones,apple2025revisiting,shukor2025scaling,busbridge2025distillationscalinglaws,bethune2026designspacetrimodalmasked,videau2026skalingchinchillasexponentsmeet,ramapuram2026scalingpropertiescontinuousdiffusion}.
Fitting practices are correspondingly heterogeneous~\citep{choshen2024hitchhiker,li2025misfitting}. They include linear regression in log--log coordinates, quadratic fits to IsoFLOP profiles followed by a power-law regression of their estimated optima (Chinchilla Approach~2), and direct nonlinear fitting of the complete loss surface (Approach~3)~\citep{hoffmann2022training,dubey2024llama}. Approach~3 reparameterizes positive coefficients in log coordinates and minimizes a Huber loss on log-residuals using L-BFGS from a grid of initializations, retaining the local solution with the smallest observed objective.

\paragraph{Problems with scaling-law estimation.}
Recent work has shown that seemingly minor choices throughout this pipeline can materially change the resulting law.
\citet{besiroglu2024chinchilla} reanalyzed the Chinchilla data and found that the originally reported Approach~3 parameters fit reconstructed observations poorly and that premature optimizer termination during bootstrapping produced implausibly narrow confidence intervals.
Other discrepancies have been traced to FLOP accounting, warmup duration, scale-dependent hyperparameter tuning~\citep{porian2024resolving}, and ambiguity in which model parameters are counted. \citet{schaeffer2025evaluating} point out that some conclusions in the Chinchilla paper, such as the compute-optimal data-to-parameter ratio, remain fairly robust to these discrepancies. \citet{li2025misfitting} document substantial variation and frequent underreporting in functional forms, fitting objectives, optimizers, and initialization schemes.
Finally, \citet{czech2026problems} show that the parabolic approximation underlying Chinchilla Approach~2 (which is used for estimating the compute-optimal data-to-parameter ratio) can introduce systematic bias under wide or off-center IsoFLOP sampling and asymmetric scaling exponents; this is perhaps unsurprising, given that Chinchilla Approach~2 attempts to fit a quadratic to data from a curve that even in the idealized setting is far from a quadratic.  These works attribute variation in fitted laws to choices in the estimation pipeline or to optimization error.  We study a different question: whether fits with nearly the same fitting loss can make substantially different predictions.

\paragraph{Improved fitting and experimental design.}
Several recent approaches improve particular stages of scaling-law estimation.
\citet{shukor2025scaling} replace a fixed grid of L-BFGS initializations with basin hopping, using randomized moves between local minima and L-BFGS as an inner solver. This can locate better fits more efficiently in high-dimensional scaling laws, but remains a heuristic; at the end of the process, the practitioner is still left unsure as to how well the fitting loss was actually optimized.
For the squared-error objective in untransformed loss space, \citet{czech2026problems} exploit partial linearity through Variable Projection with Non-negative Least Squares (VPNLS): for the scaling law in \Cref{eq:chinchilla-power-law-form}, fixing the exponents $(\alpha_1,\dots,\alpha_k)$ allows the coefficients $(E,A_1,\dots, A_k)$ to be determined by least-squares regression, reducing the nonlinear search to the exponent dimensions.
This reduction does not directly apply to the standard log-residual Huber objective, however.
Anecdotally, we observe that using the log-residual loss yields lower extrapolation error than using MSE or Huber loss in untransformed residual space.
Hence, the faster and more robust optimization from VPNLS comes at the cost of having to use a potentially worse fitting objective.
Complementary work improves the data supplied to the fit: \citet{choshen2024hitchhiker} study which models and checkpoints make extrapolation more reliable. For sequential experimental design, \citet{li2026spend} use an approximate posterior over multiple fitting basins to select runs by their expected reduction in target-region prediction uncertainty relative to cost.

\paragraph{The Rashomon effect.}  Following~\citet{breiman2001}, the set of models whose empirical loss is within a tolerance of the best, often called the Rashomon set, has been used to study variable importance~\citep{fisher2019all} and the variation of predictions across near-optimal models~\citep{marx2020predictive,watsondaniels2023predictive}.  These quantities can be computed exactly by integer programming for linear classifiers~\citep{marx2020predictive}, by optimization for convex losses~\citep{watsondaniels2023predictive}, and by enumeration for sparse decision trees~\citep{xin2022exploring}.  The sublevel set $\mathcal{S}_\epsilon$ is the Rashomon set of the scaling law function class.  Our algorithms give certified approximations to this set and to the range of predictions it induces, for a continuous, nonconvex model class.

\paragraph{Global non-convex optimization and interval analysis.}
Our algorithms draw on deterministic global optimization using branch-and-bound methods and set inversion.
Broadly speaking, many branch-and-bound methods recursively partition a search domain, compute lower bounds over each region, maintain feasible incumbents as upper bounds, and discard regions that cannot improve the incumbent~\citep{lawler1966branch,horst1996global,floudas2000deterministic}.
Interval global optimization applies the same principle using rigorous enclosures of function values over parameter boxes~\citep{hansen2003global}.
Relatedly, \citet{jaulin1993set} introduced Set Inversion Via Interval Analysis (SIVIA) to construct guaranteed inner and outer approximations of parameter sets consistent with bounded-error observations. This approach inspired our viable-set cover algorithm (\Cref{alg:viable-set-cover}).
\section{Conclusion}
\label{sec:conclusion}
The practice of fitting scaling laws occupies a central role in AI research, guiding resource allocation and architectural decisions for frontier models. Yet the optimization process that translates empirical scaling ladders into parametric curves has largely been treated as a routine heuristic. In this work, we have demonstrated that this fitting stage introduces ambiguities that can substantially undermine the reliability of scaling forecasts.

Standard fitting procedures run local optimizers from a grid of initializations and return whichever parameters achieved the lowest loss on that grid. Because the loss landscape is non-convex, the returned fit carries no guarantee of being nearly optimal; and because the sublevel sets of near-minimizers are broad, even a nearly optimal fit is largely an arbitrary selection from the $\eps$-sublevel set.  By contrast, our framework certifies how close a fit is to optimal and does not attempt to select a single, favored curve. Instead, we provide procedures for tightly bounding the $\eps$-sublevel set and for determining the complete envelope of predictions made by near-optimal fits. This shifts the focus from finding an illusory unique fit to quantifying the uncertainty introduced by the fitting stage.

Scaling law fitting is only the final stage of scaling law estimation. While the tools developed in this work allow practitioners to compute the certified prediction envelope for a given dataset, the width of that envelope is ultimately governed by the earlier stages of the pipeline---the design of the scaling ladder, the training procedures, and the choice of evaluation metrics---that generate the data.  Improving the reliability of scaling law forecasts requires optimizing each stage of this pipeline.

\bibliography{references}

\begin{thebibliography}{33}
\providecommand{\natexlab}[1]{#1}
\providecommand{\url}[1]{\texttt{#1}}
\expandafter\ifx\csname urlstyle\endcsname\relax
  \providecommand{\doi}[1]{doi: #1}\else
  \providecommand{\doi}{doi: \begingroup \urlstyle{rm}\Url}\fi

\bibitem[Besiroglu et~al.(2024)Besiroglu, Erdil, Barnett, and You]{besiroglu2024chinchilla}
T.~Besiroglu, E.~Erdil, M.~Barnett, and J.~You.
\newblock Chinchilla scaling: A replication attempt.
\newblock \emph{arXiv preprint arXiv:2404.10102}, 2024.
\newblock URL \url{https://arxiv.org/abs/2404.10102}.

\bibitem[Bethune et~al.(2026)Bethune, Turrisi, Mlodozeniec, Lopez, Boominathan, Bhendawade, Shidani, Pelemans, Olausson, Hjelm, Dixon, Monteiro, Ablin, Banna, Blaas, Henderson, Noriy, Busbridge, Susskind, Cuturi, Belousova, Zappella, Webb, and Ramapuram]{bethune2026designspacetrimodalmasked}
L.~Bethune, V.~Turrisi, B.~K. Mlodozeniec, P.~R. Lopez, L.~Boominathan, N.~Bhendawade, A.~Shidani, J.~Pelemans, T.~X. Olausson, D.~Hjelm, P.~Dixon, J.~Monteiro, P.~Ablin, V.~Banna, A.~Blaas, N.~Henderson, K.~Noriy, D.~Busbridge, J.~Susskind, M.~Cuturi, I.~Belousova, L.~Zappella, R.~Webb, and J.~Ramapuram.
\newblock The design space of tri-modal masked diffusion models, 2026.
\newblock URL \url{https://arxiv.org/abs/2602.21472}.

\bibitem[Breiman(2001)]{breiman2001}
L.~Breiman.
\newblock {Statistical Modeling: The Two Cultures (with comments and a rejoinder by the author)}.
\newblock \emph{Statistical Science}, 16\penalty0 (3):\penalty0 199 -- 231, 2001.

\bibitem[Busbridge et~al.(2025)Busbridge, Shidani, Weers, Ramapuram, Littwin, and Webb]{busbridge2025distillationscalinglaws}
D.~Busbridge, A.~Shidani, F.~Weers, J.~Ramapuram, E.~Littwin, and R.~Webb.
\newblock Distillation scaling laws, 2025.
\newblock URL \url{https://arxiv.org/abs/2502.08606}.

\bibitem[Choshen et~al.(2024)Choshen, Zhang, and Andreas]{choshen2024hitchhiker}
L.~Choshen, Y.~Zhang, and J.~Andreas.
\newblock A hitchhiker's guide to scaling law estimation.
\newblock \emph{arXiv preprint arXiv:2410.11840}, 2024.
\newblock URL \url{https://arxiv.org/abs/2410.11840}.

\bibitem[Czech et~al.(2026)Czech, Xu, Elmatad, Wang, and Held]{czech2026problems}
E.~Czech, Z.~Xu, Y.~Elmatad, Y.~Wang, and W.~Held.
\newblock Problems with {Chinchilla} approach 2: Systematic biases in {IsoFLOP} parabola fits.
\newblock \emph{arXiv preprint arXiv:2603.22339}, 2026.
\newblock URL \url{https://arxiv.org/abs/2603.22339}.

\bibitem[Dubey et~al.(2024)Dubey, Grattafiori, Jauhri, et~al.]{dubey2024llama}
A.~Dubey, A.~Grattafiori, A.~Jauhri, et~al.
\newblock The llama 3 herd of models, 2024.
\newblock URL \url{https://arxiv.org/abs/2407.21783}.

\bibitem[Fisher et~al.(2019)Fisher, Rudin, and Dominici]{fisher2019all}
A.~Fisher, C.~Rudin, and F.~Dominici.
\newblock All models are wrong, but many are useful: Learning a variable's importance by studying an entire class of prediction models simultaneously.
\newblock \emph{Journal of Machine Learning Research}, 20\penalty0 (177):\penalty0 1--81, 2019.

\bibitem[Floudas(2000)]{floudas2000deterministic}
C.~A. Floudas.
\newblock \emph{Deterministic Global Optimization: Theory, Methods and Applications}.
\newblock Kluwer Academic Publishers, Dordrecht, 2000.
\newblock \doi{10.1007/978-1-4757-5025-6}.

\bibitem[Hansen and Walster(2003)]{hansen2003global}
E.~Hansen and G.~W. Walster.
\newblock \emph{Global Optimization Using Interval Analysis}.
\newblock Marcel Dekker, New York, 2nd edition, 2003.
\newblock \doi{10.1201/9780203026939}.

\bibitem[Held et~al.(2026)Held, {Open Athena}, and {Marin Community}]{marin2026delphi}
W.~Held, {Open Athena}, and {Marin Community}.
\newblock Scaling laws that extrapolate 300x past the fit ({Delphi}).
\newblock \url{https://openathena.ai/blog/delphi/}, 2026.
\newblock Technical Report / OpenAthena Blog.

\bibitem[Hoffmann et~al.(2022)Hoffmann, Borgeaud, Mensch, Buchatskaya, Cai, Rutherford, de~Las~Casas, Hendricks, Welbl, Clark, Hennigan, Noland, Millican, {van den Driessche}, Damoc, Guy, Osindero, Simonyan, Elsen, Rae, Vinyals, and Sifre]{hoffmann2022training}
J.~Hoffmann, S.~Borgeaud, A.~Mensch, E.~Buchatskaya, T.~Cai, E.~Rutherford, D.~de~Las~Casas, L.~A. Hendricks, J.~Welbl, A.~Clark, T.~Hennigan, E.~Noland, K.~Millican, G.~{van den Driessche}, B.~Damoc, A.~Guy, S.~Osindero, K.~Simonyan, E.~Elsen, J.~W. Rae, O.~Vinyals, and L.~Sifre.
\newblock Training compute-optimal large language models.
\newblock \emph{arXiv preprint arXiv:2203.15556}, 2022.
\newblock URL \url{https://arxiv.org/abs/2203.15556}.

\bibitem[Horst and Tuy(1996)]{horst1996global}
R.~Horst and H.~Tuy.
\newblock \emph{Global Optimization: Deterministic Approaches}.
\newblock Springer-Verlag, Berlin, Heidelberg, 3rd edition, 1996.
\newblock \doi{10.1007/978-3-662-03199-5}.

\bibitem[Huber(1964)]{huber1964robust}
P.~J. Huber.
\newblock {Robust Estimation of a Location Parameter}.
\newblock \emph{The Annals of Mathematical Statistics}, 35\penalty0 (1):\penalty0 73 -- 101, 1964.
\newblock \doi{10.1214/aoms/1177703732}.
\newblock URL \url{https://doi.org/10.1214/aoms/1177703732}.

\bibitem[Jaulin and Walter(1993)]{jaulin1993set}
L.~Jaulin and E.~Walter.
\newblock Set inversion via interval analysis for nonlinear bounded-error estimation.
\newblock \emph{Automatica}, 29\penalty0 (4):\penalty0 1053--1064, 1993.
\newblock \doi{10.1016/0005-1098(93)90106-4}.

\bibitem[Kaplan et~al.(2020)Kaplan, McCandlish, Henighan, Brown, Chess, Child, Gray, Radford, Wu, and Amodei]{kaplan2020scaling}
J.~Kaplan, S.~McCandlish, T.~Henighan, T.~B. Brown, B.~Chess, R.~Child, S.~Gray, A.~Radford, J.~Wu, and D.~Amodei.
\newblock Scaling laws for neural language models.
\newblock \emph{arXiv preprint arXiv:2001.08361}, 2020.
\newblock URL \url{https://arxiv.org/abs/2001.08361}.

\bibitem[Krajewski et~al.(2025)Krajewski, Shidani, Busbridge, Wiseman, and Ramapuram]{apple2025revisiting}
J.~Krajewski, A.~Shidani, D.~Busbridge, S.~Wiseman, and J.~Ramapuram.
\newblock Revisiting the scaling properties of downstream metrics in large language model training.
\newblock \emph{arXiv preprint arXiv:2512.08894}, 2025.
\newblock URL \url{https://arxiv.org/abs/2512.08894}.

\bibitem[Lawler and Wood(1966)]{lawler1966branch}
E.~L. Lawler and D.~E. Wood.
\newblock Branch-and-bound methods: A survey.
\newblock \emph{Operations Research}, 14\penalty0 (4):\penalty0 699--719, 1966.
\newblock \doi{10.1287/opre.14.4.699}.

\bibitem[Li et~al.(2024)Li, Fang, Smyrnis, Ivgi, Jordan, Gadre, Bansal, Guha, Keh, Arora, Garg, Xin, Muennighoff, Heckel, Mercat, Chen, Gururangan, Wortsman, Albalak, Bitton, Nezhurina, Abbas, Hsieh, Ghosh, Gardner, Kilian, Zhang, Shao, Pratt, Sanyal, Ilharco, Daras, Marathe, Gokaslan, Zhang, Chandu, Nguyen, Vasiljevic, Kakade, Song, Sanghavi, Faghri, Oh, Zettlemoyer, Lo, El-Nouby, Pouransari, Toshev, Wang, Groeneveld, Soldaini, Koh, Jitsev, Kollar, Dimakis, Carmon, Dave, Schmidt, and Shankar]{li2024datacomp}
J.~Li, A.~Fang, G.~Smyrnis, M.~Ivgi, M.~Jordan, S.~Gadre, H.~Bansal, E.~Guha, S.~Keh, K.~Arora, S.~Garg, R.~Xin, N.~Muennighoff, R.~Heckel, J.~Mercat, M.~Chen, S.~Gururangan, M.~Wortsman, A.~Albalak, Y.~Bitton, M.~Nezhurina, A.~Abbas, C.-Y. Hsieh, D.~Ghosh, J.~Gardner, M.~Kilian, H.~Zhang, R.~Shao, S.~Pratt, S.~Sanyal, G.~Ilharco, G.~Daras, K.~Marathe, A.~Gokaslan, J.~Zhang, K.~Chandu, T.~Nguyen, I.~Vasiljevic, S.~Kakade, S.~Song, S.~Sanghavi, F.~Faghri, S.~Oh, L.~Zettlemoyer, K.~Lo, A.~El-Nouby, H.~Pouransari, A.~Toshev, S.~Wang, D.~Groeneveld, L.~Soldaini, P.~W. Koh, J.~Jitsev, T.~Kollar, A.~G. Dimakis, Y.~Carmon, A.~Dave, L.~Schmidt, and V.~Shankar.
\newblock {DataComp-LM}: In search of the next generation of training sets for language models.
\newblock \emph{arXiv preprint arXiv:2406.11794}, 2024.
\newblock URL \url{https://arxiv.org/abs/2406.11794}.

\bibitem[Li et~al.(2025)Li, Kudugunta, and Zettlemoyer]{li2025misfitting}
M.~Li, S.~Kudugunta, and L.~Zettlemoyer.
\newblock {(Mis)Fitting}: A survey of scaling laws.
\newblock \emph{arXiv preprint arXiv:2502.18969}, 2025.
\newblock URL \url{https://arxiv.org/abs/2502.18969}.

\bibitem[Li et~al.(2026)Li, Li, Lin, Sun, Talwalkar, and Yang]{li2026spend}
S.~Li, S.~Li, H.~Lin, W.~Sun, A.~Talwalkar, and Y.~Yang.
\newblock Spend less, fit better: Budget-efficient scaling law fitting via active experiment selection.
\newblock \emph{arXiv preprint arXiv:2604.22753}, 2026.
\newblock URL \url{https://arxiv.org/abs/2604.22753}.

\bibitem[Madaan et~al.(2024)Madaan, Singh, Schaeffer, Poulton, Koyejo, Stenetorp, Narang, and Hupkes]{madaan2024quantifying}
L.~Madaan, A.~K. Singh, R.~Schaeffer, A.~Poulton, S.~Koyejo, P.~Stenetorp, S.~Narang, and D.~Hupkes.
\newblock Quantifying variance in evaluation benchmarks.
\newblock \emph{arXiv preprint arXiv:2406.10229}, 2024.
\newblock URL \url{https://arxiv.org/abs/2406.10229}.

\bibitem[Marx et~al.(2020)Marx, Calmon, and Ustun]{marx2020predictive}
C.~Marx, F.~Calmon, and B.~Ustun.
\newblock Predictive multiplicity in classification.
\newblock In \emph{International Conference on Machine Learning}, 2020.

\bibitem[McLeish et~al.(2025)McLeish, Kirchenbauer, Miller, Singh, Bhatele, Goldblum, Panda, and Goldstein]{gadre2025gemstones}
S.~McLeish, J.~Kirchenbauer, D.~Y. Miller, S.~Singh, A.~Bhatele, M.~Goldblum, A.~Panda, and T.~Goldstein.
\newblock Gemstones: A model suite for multi-faceted scaling laws.
\newblock \emph{arXiv preprint arXiv:2502.06857}, 2025.
\newblock URL \url{https://arxiv.org/abs/2502.06857}.

\bibitem[Porian et~al.(2024)Porian, Wortsman, Jitsev, Schmidt, and Carmon]{porian2024resolving}
T.~Porian, M.~Wortsman, J.~Jitsev, L.~Schmidt, and Y.~Carmon.
\newblock Resolving discrepancies in compute-optimal scaling of language models.
\newblock \emph{arXiv preprint arXiv:2406.19146}, 2024.
\newblock URL \url{https://arxiv.org/abs/2406.19146}.

\bibitem[Ramapuram et~al.(2026)Ramapuram, Dhekane, Shidani, Busbridge, Mazoure, Gu, Webb, Likhomanenko, and Jaitly]{ramapuram2026scalingpropertiescontinuousdiffusion}
J.~Ramapuram, E.~G. Dhekane, A.~Shidani, D.~Busbridge, B.~Mazoure, Z.~Gu, R.~Webb, T.~Likhomanenko, and N.~Jaitly.
\newblock Scaling properties of continuous diffusion spoken language models, 2026.
\newblock URL \url{https://arxiv.org/abs/2604.24416}.

\bibitem[Rudin et~al.(2024)Rudin, Zhong, Semenova, Seltzer, Parr, Liu, Katta, Donnelly, Chen, and Boner]{rudin2024amazing}
C.~Rudin, C.~Zhong, L.~Semenova, M.~Seltzer, R.~Parr, J.~Liu, S.~Katta, J.~Donnelly, H.~Chen, and Z.~Boner.
\newblock Position: Amazing things come from having many good models.
\newblock In \emph{International Conference on Machine Learning}, 2024.

\bibitem[Schaeffer et~al.(2025)Schaeffer, Levi, Kirsch, Guenais, Miranda, Obbad, and Koyejo]{schaeffer2025evaluating}
R.~Schaeffer, N.~Levi, A.~Kirsch, T.~Guenais, B.~Miranda, E.~Obbad, and S.~Koyejo.
\newblock Evaluating the robustness of chinchilla compute-optimal scaling.
\newblock \emph{arXiv preprint arXiv:2509.23963}, 2025.
\newblock URL \url{https://arxiv.org/abs/2509.23963}.

\bibitem[Shukor et~al.(2025)Shukor, B{\'e}thune, Busbridge, Grangier, Fini, El-Nouby, and Ablin]{shukor2025scaling}
M.~Shukor, L.~B{\'e}thune, D.~Busbridge, D.~Grangier, E.~Fini, A.~El-Nouby, and P.~Ablin.
\newblock Scaling laws for optimal data mixtures.
\newblock \emph{arXiv preprint arXiv:2507.09404}, 2025.
\newblock URL \url{https://arxiv.org/abs/2507.09404}.

\bibitem[Videau et~al.(2026)Videau, Youbi-Idrissi, Lopez-Paz, and Ahuja]{videau2026skalingchinchillasexponentsmeet}
M.~Videau, B.~Youbi-Idrissi, D.~Lopez-Paz, and K.~Ahuja.
\newblock Skaling: Chinchilla's exponents meet kaplan's coupling, 2026.
\newblock URL \url{https://arxiv.org/abs/2608.07222}.

\bibitem[Virtanen et~al.(2020)Virtanen, Gommers, Oliphant, Haberland, Reddy, Cournapeau, Burovski, Peterson, Weckesser, Bright, {van der Walt}, Brett, Wilson, Millman, Mayorov, Nelson, Jones, Kern, Larson, Carey, Polat, Feng, Moore, {VanderPlas}, Laxalde, Perktold, Cimrman, Henriksen, Quintero, Harris, Archibald, Ribeiro, Pedregosa, {van Mulbregt}, and {SciPy 1.0 Contributors}]{2020SciPy}
P.~Virtanen, R.~Gommers, T.~E. Oliphant, M.~Haberland, T.~Reddy, D.~Cournapeau, E.~Burovski, P.~Peterson, W.~Weckesser, J.~Bright, S.~J. {van der Walt}, M.~Brett, J.~Wilson, K.~J. Millman, N.~Mayorov, A.~R.~J. Nelson, E.~Jones, R.~Kern, E.~Larson, C.~J. Carey, {\.I}.~Polat, Y.~Feng, E.~W. Moore, J.~{VanderPlas}, D.~Laxalde, J.~Perktold, R.~Cimrman, I.~Henriksen, E.~A. Quintero, C.~R. Harris, A.~M. Archibald, A.~H. Ribeiro, F.~Pedregosa, P.~{van Mulbregt}, and {SciPy 1.0 Contributors}.
\newblock {{SciPy} 1.0: Fundamental Algorithms for Scientific Computing in Python}.
\newblock \emph{Nature Methods}, 17:\penalty0 261--272, 2020.
\newblock \doi{10.1038/s41592-019-0686-2}.

\bibitem[Watson-Daniels et~al.(2023)Watson-Daniels, Parkes, and Ustun]{watsondaniels2023predictive}
J.~Watson-Daniels, D.~C. Parkes, and B.~Ustun.
\newblock Predictive multiplicity in probabilistic classification.
\newblock In \emph{Proceedings of the AAAI Conference on Artificial Intelligence}, 2023.

\bibitem[Xin et~al.(2022)Xin, Zhong, Chen, Takagi, Seltzer, and Rudin]{xin2022exploring}
R.~Xin, C.~Zhong, Z.~Chen, T.~Takagi, M.~I. Seltzer, and C.~Rudin.
\newblock Exploring the whole {R}ashomon set of sparse decision trees.
\newblock In \emph{Advances in Neural Information Processing Systems}, 2022.

\end{thebibliography}

\appendix
\crefalias{section}{appendix}
\crefalias{subsection}{appendix}
\section{Proofs for Certified Fitting Algorithms}
\label{app:proofs}

This appendix gives the complete algorithms and proves
\Cref{thm:main}.

We use the box-corner notation $B^{\lowercorner},B^{\uppercorner}$ introduced in \Cref{sec:branch-and-bound}. The proofs work in the reparameterized parameter space on the compact domain $\mathcal U$, with $\Theta=\phi^{-1}(\mathcal U)$ and $\mathcal S^\phi_\epsilon=\phi(\mathcal S_\epsilon)$ as defined in \Cref{eq:original-search-domain,eq:transformed-sublevel-set}.

\begin{lemma}[Lipschitz continuity of predictions]
\label{lem:predictions-lipschitz}
Let \(\mathcal U\) be the compact hyperrectangle in \Cref{eq:certified-search-domain}, and write \(u=(u_1,\ldots,u_d)\), where \(d=2k+1\). For every fixed \(x\in\mathbb R_{>0}^k\), there exist finite constants \(K_1(x),\ldots,K_d(x)\) such that, for all \(u,v\in\mathcal U\),
\begin{equation}
    \left|L(x;u)-L(x;v)\right|
    \leq
    \sum_{j=1}^d K_j(x)\lvert u_j-v_j\rvert .
    \label{eq:prediction-lipschitz}
\end{equation}
Consequently, for every box
\(B=[B^{\lowercorner},B^{\uppercorner}]\subseteq\mathcal U\),
\begin{equation}
    \sup_{u\in B}L(x;u)-\inf_{u\in B}L(x;u)
    \leq
    \sum_{j=1}^d
        K_j(x)
        \bigl(B^{\uppercorner}_j-B^{\lowercorner}_j\bigr).
    \label{eq:prediction-box-width}
\end{equation}
\end{lemma}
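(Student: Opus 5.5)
The approach is to prove the Lipschitz bound coordinate by coordinate, using the explicit form \Cref{eq:monotone-scaling-law}, $L(x;u)=E+\sum_{i=1}^k C_i r_i^{s_i(x)}$, with $s_i(x)=\log(x_i/x_i^{\min})\in\R$ fixed. Since $\mathcal U$ is a compact box with $\underline r_i>0$, the function $u\mapsto L(x;u)$ is $C^1$ on an open neighbourhood of $\mathcal U$. Its partial derivatives are
\[
\partial_E L=1,\qquad \partial_{C_i}L=r_i^{s_i(x)},\qquad \partial_{r_i}L=C_i\,s_i(x)\,r_i^{s_i(x)-1}.
\]
We then set
\[
K_E=1,\qquad K_{C_i}(x)=\max_{r\in[\underline r_i,\overline r_i]} r^{s_i(x)},\qquad K_{r_i}(x)=\overline C_i\,\lvert s_i(x)\rvert \max_{r\in[\underline r_i,\overline r_i]} r^{s_i(x)-1}.
\]
These are finite because $r\mapsto r^{s}$ is continuous on the compact interval $[\underline r_i,\overline r_i]\subset(0,1]$ for any real exponent $s$. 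This is the only place the standing assumption $\underline r_i>0$ is used: for $s_i(x)<1$ the power $r^{s_i(x)-1}$ blows up at $0$, which is exactly why zero lower bounds are excluded.

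To obtain \Cref{eq:prediction-lipschitz}, move from $v$ to $u$ one coordinate at a time. Define intermediate points $w^{(0)}=v$, $w^{(j)}=(u_1,\dots,u_j,v_{j+1},\dots,v_d)$. All of them lie in $\mathcal U$ because $\mathcal U$ is a product of intervals. Telescope:
\[
L(x;u)-L(x;v)=\sum_{j}\bigl(L(x;w^{(j)})-L(x;w^{(j-1)})\bigr).
\]
Apply the one-dimensional mean value theorem to each term along coordinate $j$. The segment between $w^{(j-1)}$ and $w^{(j)}$ stays in $\mathcal U$ (again by the box structure), so the partial derivative evaluated on it is bounded in absolute value by $K_j(x)$. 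Each term is therefore at most $K_j(x)\lvert u_j-v_j\rvert$. Note that $\partial_{r_i}L$ depends on $C_i$ and $r_i$, and the bound uses $C_i\le\overline C_i$ and the maximum over $r_i$. Since the constants are uniform over the whole box, the order in which coordinates are updated does not matter.

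For \Cref{eq:prediction-box-width}, $B$ is compact and $L(x;\cdot)$ is continuous, so the sup and inf are attained at some $u,v\in B$. For these points $\lvert u_j-v_j\rvert\le B^{\uppercorner}_j-B^{\lowercorner}_j$, and plugging into \Cref{eq:prediction-lipschitz} gives the claim. The argument has no real obstacle. The only care needed is handling arbitrary real sign of $s_i(x)$, which matters for targets with $x_i<x_i^{\min}$, so we take absolute values and maxima of $r^{s}$ over the interval rather than appealing to monotonicity.
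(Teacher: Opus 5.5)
Your proof is correct and follows essentially the same route as the paper: bound each partial derivative uniformly on the compact box (using $\underline r_i>0$, with your explicit constants coinciding with the paper's $K_j(x)=\sup_{u\in\mathcal U}\lvert\partial L(x;u)/\partial u_j\rvert$), then integrate along a path inside $\mathcal U$ and take extremes over $B$. The only difference is cosmetic: you telescope coordinate by coordinate with the one-dimensional mean value theorem, whereas the paper integrates the gradient along the straight segment $v+t(u-v)$; both paths stay in $\mathcal U$ because it is a box.
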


Note that an immediate consequence is that the conclusion holds for log-predictions as well. Indeed, \Cref{eq:certified-search-domain} gives
\(L(x;u)\geq E\geq\underline E>0\), so the mean value theorem implies
\begin{equation}
    \left|\log L(x;u)-\log L(x;v)\right|
    \leq
    \frac{1}{\underline E}
    \left|L(x;u)-L(x;v)\right|
    \leq
    \frac{1}{\underline E}
    \sum_{j=1}^d K_j(x)\lvert u_j-v_j\rvert .
    \label{eq:log-prediction-lipschitz}
\end{equation}
In particular, for every training input \(x^{(n)}\) and box
\(B=[B^{\lowercorner},B^{\uppercorner}]\), \Cref{lem:monotonicity} gives
\begin{equation}
    0\leq p_n^+(B)-p_n^-(B)
    \leq
    \sum_{j=1}^d \frac{ K_j(x^{(n)}) }{\underline{E}}
    \bigl(B^{\uppercorner}_j-B^{\lowercorner}_j\bigr).
    \label{eq:training-log-prediction-box-width}
\end{equation}

\begin{proof}
Fix \(x\in\mathbb R_{>0}^k\), and abbreviate
\(s_i=s_i(x)\). By \Cref{eq:monotone-scaling-law}, we can write \(
    L(x;u)=E+\sum_{i=1}^k C_i r_i^{s_i}
\), whose partial derivatives are
\begin{align}
    \frac{\partial L(x;u)}{\partial E}=1,\qquad
    \frac{\partial L(x;u)}{\partial C_i}=r_i^{s_i},\qquad
    \frac{\partial L(x;u)}{\partial r_i} =C_i s_i r_i^{s_i-1}.
        \label{eq:lipschitz-lemma-prediction-gradients}
\end{align}
Because \(\mathcal U\) is compact and \(r_i\geq\underline r_i > 0\), these derivatives are continuous and bounded on \(\mathcal U\). Hence
\[
    K_j(x)
    :=
    \sup_{u\in\mathcal U}
    \left|
        \frac{\partial L(x;u)}{\partial u_j}
    \right|
    <\infty .
\]
Noting that $L(x; u)$ is clearly continuously differentiable on the entire domain $\mathcal{U}$, these derivative bounds yield the desired Lipschitz estimate, as follows.

For \(u,v\in\mathcal U\), the line segment \(w(t)=v+t(u-v)\), \(t\in[0,1]\), remains in \(\mathcal U\). The fundamental theorem of calculus therefore gives
\begin{align*}
    \left|L(x;u)-L(x;v)\right|
    &=
    \left|
        \int_0^1
        \nabla_u L(x;w(t))^\top(u-v)\,dt
    \right| \\
    &\leq
    \sum_{j=1}^d K_j(x)\lvert u_j-v_j\rvert .
\end{align*}
This proves \Cref{eq:prediction-lipschitz}. Taking the supremum over \(u,v\in B\) proves \Cref{eq:prediction-box-width}.
\end{proof}

\begin{lemma}[Valid and consistent loss enclosures]
\label{lem:consistent-loss-enclosure}
Let \(\mathcal U\) be the compact parameter domain in \Cref{eq:certified-search-domain}, and let \(d=2k+1\). For a box \(B=[B^{\lowercorner},B^{\uppercorner}]\subseteq\mathcal U\), recall that
\[
    p_n^-(B)
    =
    \log L(x^{(n)};B^{\lowercorner}),
    \qquad
    p_n^+(B)
    =
    \log L(x^{(n)};B^{\uppercorner}),
    \qquad
    t_n=\log y^{(n)}.
\]
Let \(\underline L_{\texttt{fit}}(B),\overline L_{\texttt{fit}}(B) \) be the lower and upper bounds defined in
\Cref{eq:box-loss-lower-bound,eq:box-loss-upper-bound}, respectively.
Then, the following bounds hold:
\begin{equation}
    \underline L_{\texttt{fit}}(B)
    \leq
    \inf_{u\in B}L_{\texttt{fit}}(u)
    \leq
    \sup_{u\in B}L_{\texttt{fit}}(u)
    \leq
    \overline L_{\texttt{fit}}(B).
    \label{eq:box-loss-enclosure}
\end{equation}

Moreover, the bounds are consistent as the box side lengths tend to zero, in the following sense. Let \(K_j(x^{(n)})\) be the constants from \Cref{lem:predictions-lipschitz}. Then
\begin{align}
    0
    &\leq
    \overline L_{\texttt{fit}}(B)
    -
    \underline L_{\texttt{fit}}(B)
     \leq
    \frac{\delta}{N_{\mathrm{obs}}\underline E}
    \sum_{n=1}^{N_{\mathrm{obs}}}\sum_{j=1}^d
        K_j(x^{(n)})
        \bigl(
            B^{\uppercorner}_j-B^{\lowercorner}_j
        \bigr),
    \label{eq:box-loss-enclosure-width}
\end{align}
where $\delta$ is the Huber loss parameter.
\end{lemma}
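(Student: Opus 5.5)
The argument is observation by observation, since both bounds and $L_{\texttt{fit}}$ are averages over $n$ of per-observation terms. Fix $n$ and a box $B\subseteq\mathcal U$.

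For the enclosure \eqref{eq:box-loss-enclosure}, take any $u\in B$. By \Cref{lem:monotonicity}, the log-prediction $\log L(x^{(n)};u)$ lies in $I_n(B)=[p_n^-(B),p_n^+(B)]$. Hence the residual $z=\log L(x^{(n)};u)-t_n$ lies in the interval $[p_n^--t_n,\,p_n^+-t_n]$.

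For the lower bound, I use that $\operatorname{Huber}_\delta$ is even and nondecreasing in $|z|$. Since $|z|\ge\operatorname{dist}(t_n,I_n(B))=d_n(B)$, this gives $\operatorname{Huber}_\delta(z)\ge\operatorname{Huber}_\delta(d_n(B))$.

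For the upper bound, I use convexity of $\operatorname{Huber}_\delta$. Its maximum over the residual interval is attained at an endpoint, so $\operatorname{Huber}_\delta(z)\le\max\{\operatorname{Huber}_\delta(p_n^--t_n),\operatorname{Huber}_\delta(p_n^+-t_n)\}$.

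Averaging over $n$ gives $\underline L_{\texttt{fit}}(B)\le L_{\texttt{fit}}(u)\le\overline L_{\texttt{fit}}(B)$ for every $u\in B$. Taking the infimum and supremum over $u$ yields \eqref{eq:box-loss-enclosure}, and nonnegativity of the gap follows, since $B$ is nonempty.

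For the width bound \eqref{eq:box-loss-enclosure-width}, the key fact is that $\operatorname{Huber}_\delta$ is $\delta$-Lipschitz, because $|\operatorname{Huber}_\delta'(z)|=\min\{|z|,\delta\}\le\delta$. I then show the per-observation gap is at most $\delta(p_n^+-p_n^-)$. Let $a=p_n^--t_n$ and $b=p_n^+-t_n$, and let $z^\ast$ be the endpoint achieving the maximum. The point of $[a,b]$ closest to $0$ has absolute value $d_n(B)$, and it is within distance $b-a$ of $z^\ast$. The Lipschitz property then gives
\[
\max\{\operatorname{Huber}_\delta(a),\operatorname{Huber}_\delta(b)\}-\operatorname{Huber}_\delta(d_n(B))\le\delta(b-a)=\delta\bigl(p_n^+(B)-p_n^-(B)\bigr).
\]

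Finally, I bound $p_n^+-p_n^-$ by \eqref{eq:training-log-prediction-box-width}, which follows from \Cref{lem:predictions-lipschitz} together with $L\ge\underline E$. Averaging over $n$ gives exactly the stated constant $\delta/(N_{\mathrm{obs}}\underline E)$.

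The only step needing care is the per-observation comparison between the maximum at an endpoint and the value at the closest point to zero. I will handle it explicitly by splitting into two cases: $0\in[a,b]$, where $d_n=0$, and $0\notin[a,b]$, where $d_n=\min(|a|,|b|)$. In both cases the two points lie in $[a,b]$, so their separation is at most $b-a$. Everything else is a direct invocation of earlier lemmas.
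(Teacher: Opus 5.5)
Your proposal is correct and follows essentially the same route as the paper's proof. You confine each residual to $[p_n^-(B)-t_n,\,p_n^+(B)-t_n]$ via \Cref{lem:monotonicity}, bound the per-observation Huber term below by its value at the point closest to zero and above by its value at an endpoint, and then combine the global $\delta$-Lipschitz property of $\operatorname{Huber}_\delta$ with \Cref{eq:training-log-prediction-box-width} for the width bound. The differences are cosmetic: you use convexity for the endpoint maximum where the paper uses evenness and monotonicity in $\lvert z\rvert$ (either suffices), and your case split on whether $0\in[a,b]$ is unnecessary, since the two compared points lie in $[a,b]$ regardless.
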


\begin{proof}
Fix a box \(B\). By \Cref{lem:monotonicity}, for every \(u\in B\) and every
observation \(n\), \(\log L(x^{(n)};u)\in[p_n^-(B),p_n^+(B)]\), so the
corresponding residual lies in
\[
    R_n(B)=[p_n^-(B)-t_n,\;p_n^+(B)-t_n].
\]

Because \(\operatorname{Huber}_\delta\) is even and nondecreasing in the
residual magnitude, its minimum over the compact interval \(R_n(B)\) is
attained at the point closest to \(0\) and its maximum at whichever endpoint
is farthest from \(0\):
\begin{align*}
    \min_{z\in R_n(B)}\operatorname{Huber}_{\delta}(z)
    &=
    \operatorname{Huber}_{\delta}\bigl(\operatorname{dist}(0,R_n(B))\bigr)
    =
    \operatorname{Huber}_{\delta}\bigl(\operatorname{dist}(t_n,I_n(B))\bigr)
    =
    \operatorname{Huber}_{\delta}\bigl(d_n(B)\bigr),\\
    \max_{z\in R_n(B)}\operatorname{Huber}_{\delta}(z)
    &=
    \max\left\{
        \operatorname{Huber}_{\delta}\bigl(p_n^-(B)-t_n\bigr),
        \operatorname{Huber}_{\delta}\bigl(p_n^+(B)-t_n\bigr)
    \right\},
\end{align*}
where the last equality uses the definition
\(d_n(B)=\operatorname{dist}(t_n,I_n(B))\). These are exactly the
per-observation summands of \(\underline L_{\texttt{fit}}(B)\) and
\(\overline L_{\texttt{fit}}(B)\) in
\Cref{eq:box-loss-lower-bound,eq:box-loss-upper-bound}.

Hence, for every \(u\in B\), since the residual
\(\log L(x^{(n)};u)-t_n\in R_n(B)\),
\[
    \operatorname{Huber}_{\delta}\bigl(d_n(B)\bigr)
    \leq
    \operatorname{Huber}_{\delta}\bigl(\log L(x^{(n)};u)-t_n\bigr)
    \leq
    \max_{z\in R_n(B)}\operatorname{Huber}_{\delta}(z).
\]
Averaging over \(n\) and noting that the outer bounds are independent of
\(u\), we take the infimum and supremum over \(u\in B\) to obtain
\Cref{eq:box-loss-enclosure}.

It remains to bound the enclosure width. By construction,
\(\overline L_{\texttt{fit}}(B)\geq\underline L_{\texttt{fit}}(B)\).
For the upper direction, \(\operatorname{Huber}_{\delta}\) is globally
\(\delta\)-Lipschitz, so for every \(n\),
\[
    \max_{z\in R_n(B)}\operatorname{Huber}_{\delta}(z)
    -
    \min_{z\in R_n(B)}\operatorname{Huber}_{\delta}(z)
    \leq
    \delta\,\bigl|R_n(B)\bigr|
    =
    \delta\bigl(p_n^+(B)-p_n^-(B)\bigr),
\]
using \(|R_n(B)|=p_n^+(B)-p_n^-(B)\). Combining this with the definitions
\Cref{eq:box-loss-lower-bound,eq:box-loss-upper-bound} and the identity
\(\min_{z\in R_n(B)}\operatorname{Huber}_{\delta}(z)
=\operatorname{Huber}_{\delta}(d_n(B))\) above,
\begin{align*}
    \overline L_{\texttt{fit}}(B)-\underline L_{\texttt{fit}}(B)
    &=
    \frac{1}{N_{\mathrm{obs}}}\sum_{n=1}^{N_{\mathrm{obs}}}
    \left[
        \max_{z\in R_n(B)}\operatorname{Huber}_{\delta}(z)
        -
        \operatorname{Huber}_{\delta}\bigl(d_n(B)\bigr)
    \right]\\
    &\leq
    \frac{\delta}{N_{\mathrm{obs}}}\sum_{n=1}^{N_{\mathrm{obs}}}\bigl(p_n^+(B)-p_n^-(B)\bigr).
\end{align*}
Finally, applying the log-prediction width bound
\Cref{eq:training-log-prediction-box-width} to every observation yields
\Cref{eq:box-loss-enclosure-width}.
\end{proof}

Lastly, we record the following fact about the side lengths of boxes obtained by splitting with \(\textsc{BranchRegion}\):

\begin{remark}[Bisection box size]
Consider a descendant box \(B_m\) obtained after \(m\) successive applications of \(\textsc{BranchRegion}\) along one branch of the search tree, and write $B^{\lowercorner}_{m,j},B^{\uppercorner}_{m,j}$ for the $j$-th coordinates of its corners. Let \(d=2k+1\) and \(W\coloneqq\max_j(\mathcal U^{\uppercorner}_j-\mathcal U^{\lowercorner}_j)\). Then
\begin{equation}
    B^{\uppercorner}_{m,j}-B^{\lowercorner}_{m,j}
    \leq
    W\,2^{-\lfloor m/d\rfloor}
    \rightarrow 0
    \qquad
    \text{for every }j.
    \label{eq:balanced-bisection-width}
\end{equation}
To see this for the widest-side rule, let \(w\) be the largest side length of some box on the branch. Side lengths never increase along the branch. While some side is longer than \(w/2\), the widest side is one of these sides, and bisecting it leaves it no longer than \(w/2\). There are at most \(d\) such sides, so after at most \(d\) further splits every side has length at most \(w/2\). Induction on \(m\) gives \Cref{eq:balanced-bisection-width}. The rule used in \Cref{alg:viable-set-cover} is the widest-side rule applied to the normalized widths \((B^{\uppercorner}_j-B^{\lowercorner}_j)/(\mathcal U^{\uppercorner}_j-\mathcal U^{\lowercorner}_j)\), so the same argument gives \(B^{\uppercorner}_{m,j}-B^{\lowercorner}_{m,j}\leq(\mathcal U^{\uppercorner}_j-\mathcal U^{\lowercorner}_j)2^{-\lfloor m/d\rfloor}\), which implies \Cref{eq:balanced-bisection-width}.
In other words, every box split indefinitely shrinks to zero width in every coordinate.
\end{remark}

\subsection{Proof of certified fitting}

\begin{proof}[Proof of \Cref{thm:main}\textup{(a)}]
Because \(\mathcal U\) is compact and
\(L_{\texttt{fit}}\) is continuous on \(\mathcal U\), the minimum
\[
    L_{\texttt{fit}}^\star
    :=
    \min_{u\in\mathcal U} L_{\texttt{fit}}(u)
\]
is attained for some $u^\ast \in \mathcal{U}$ by the Extreme Value Theorem.

We first establish the certificate maintained by \Cref{alg:certified-fit}.
Namely, we first show that at every iteration, the incumbent $\hat{u}$ provides an upper bound on the globally optimal loss, and that a corresponding global lower bound is obtained from the smallest lower bound among the active boxes. In other words, we will show that the algorithm maintains an interval $[\min_{B \in \mathcal{Q}} \underline{L}_\texttt{fit}(B), L_\texttt{fit}(\hat{u})]$ that is guaranteed to contain the globally optimal loss at every iteration; termination occurs once the width of this interval is at most
\(\epsilon\).

Let \(\mathcal Q_t\) be the collection of active boxes and let \(\hat u_t\) be the incumbent at the beginning of iteration \(t\). Since the incumbent is updated only when a better tested point is found, \(L_{\texttt{fit}}(\hat u_t)\) is nonincreasing in \(t\).

At every iteration $t$, the active boxes $\mathcal{Q}_t$ together with the boxes discarded at earlier iterations $\mathcal{D}_t$ cover \(\mathcal U\). If a box \(D\) was discarded at iteration \(s\), then the pruning rule and
\Cref{lem:consistent-loss-enclosure} give
\[
    L_{\texttt{fit}}(u)
    \geq
    \underline L_{\texttt{fit}}(D)
    \geq
    L_{\texttt{fit}}(\hat u_s)
    \geq
    L_{\texttt{fit}}(\hat u_t)
    \qquad
    \text{for every \(u\in D\) and \(t\geq s\)}.
\]
Thus a discarded box cannot contain a point that improves upon the
current incumbent.

When \(\mathcal Q_t\neq\varnothing\), define
\[
    \underline L_t
    :=
    \min_{B\in\mathcal Q_t}
    \underline L_{\texttt{fit}}(B).
\]
Every point in an active box has loss at least \(\underline L_t\),
while every point in a discarded box has loss at least
\(L_{\texttt{fit}}(\hat u_t)\). Consequently,
\begin{equation}
    \min\left\{
        L_{\texttt{fit}}(\hat u_t),
        \underline L_t
    \right\}
    \leq
    L_{\texttt{fit}}^\star
    \leq
    L_{\texttt{fit}}(\hat u_t).
    \label{eq:certified-fit-invariant}
\end{equation}
If the stopping condition
\(
    L_{\texttt{fit}}(\hat u_t)
    \leq
    \underline L_t+\epsilon
\)
holds, then \Cref{eq:certified-fit-invariant} immediately gives
\[
    0
    \leq
    L_{\texttt{fit}}(\hat u_t)-L_{\texttt{fit}}^\star
    \leq
    \epsilon,
\]
as desired.
On the other hand, if \(\mathcal Q_t\) is empty, the
discarded boxes cover \(\mathcal U\), and
\(
    L_{\texttt{fit}}(u)
    \geq
    L_{\texttt{fit}}(\hat u_t)
\) for every $u\in\mathcal U$.
Thus \(L_{\texttt{fit}}(\hat u_t)=L_{\texttt{fit}}^\star\).
Hence, if the algorithm terminates, $L_\texttt{fit} (\hat{u}) - L^\star_\texttt{fit} \leq \epsilon$ is guaranteed to hold for the returned candidate $\hat{u}$. The original parameters $\hat\theta=\phi^{-1}(\hat u)$ therefore belong to $\mathcal S_\epsilon$.

It remains only to show that this stopping condition is reached after finitely many iterations.

We can think of the boxes generated by \(\textsc{BranchRegion}\) as nodes of a binary subdivision tree, with \(\mathcal U\) as the root at depth zero, and the boxes $\{B_1, B_2\}$ obtained from \(\textsc{BranchRegion}(B)\) at some iteration as the children of $B$. All boxes at the same depth $m$ have been subdivided (branched) $m$ times, and so we can use the loss-enclosure bound \Cref{eq:box-loss-enclosure-width} with the bisection estimate \Cref{eq:balanced-bisection-width}, to bound the tightness of the upper and lower bounds for those boxes:
\begin{equation}
    \overline L_{\texttt{fit}}(B)
    -
    \underline L_{\texttt{fit}}(B)
    \leq
    K_{\mathrm{fit}}\,
    2^{-\lfloor m/d\rfloor},
    \label{eq:loss-gap-by-tree-depth}
\end{equation}
where \(d=2k+1\) and
\(
    K_{\mathrm{fit}}
    :=
    \frac{\delta\,W}{N_{\mathrm{obs}}\underline E}
    \sum_{n=1}^{N_{\mathrm{obs}}}\sum_{j=1}^d
        K_j(x^{(n)})
    <\infty
\).

Choose \(M<\infty\) such that
\[
    K_{\mathrm{fit}}2^{-\lfloor M/d\rfloor}\leq\epsilon.
\]
We will show, by contradiction, that the algorithm never splits a box of depth at least \(M\). Otherwise, let \(B\) be such a box selected for splitting at iteration $t$. Since \(B\) has minimum priority and its candidate \(u_B\in B\) was evaluated when
\(B\) was created,
\[
    L_{\texttt{fit}}(\hat u_t)
    \leq L_{\texttt{fit}}(u_B)
    \leq \overline L_{\texttt{fit}}(B)
    \leq \underline L_{\texttt{fit}}(B)+\epsilon
    = \underline L_t+\epsilon.
\]
Thus, the stopping condition would hold, contradicting that \(B\) was
selected for splitting.

Therefore, the algorithm splits only boxes of depth less than \(M\).
Since the binary subdivision tree has only finitely many such boxes
(at most \(2^M-1\)), and the algorithm adds two child nodes to the tree at every iteration, the algorithm terminates after finitely many
iterations.
    

\end{proof}


\begin{remark}[Iteration complexity in $\epsilon$]
\label{remark:certified-fitting-complexity}
The preceding argument also gives an explicit worst-case bound on the number of branching iterations. Let
\[
    q
    =
    \left\lceil
        \log_2
        \max\left\{
            1,
            \frac{K_{\mathrm{fit}}}{\epsilon}
        \right\}
    \right\rceil,
    \qquad
    M=dq.
\]
Then the algorithm splits only boxes of depth less than \(M\), and therefore performs at most
\[
    2^M-1
    =
    2^{dq}-1
\]
branching iterations. In particular, this is at most
\[
    \left(\frac{2K_{\mathrm{fit}}}{\epsilon}\right)^d.
\]
This gives us $\mathcal{O}(\epsilon^{-d})$ iteration complexity in $\epsilon$ as $\epsilon \to 0$. 

The queue contains at most one more box than the number of branching iterations, so the same bound in $\epsilon$ applies to its memory usage. This bound is very conservative: it assumes that the complete subdivision tree is explored and does not account for pruning.
\end{remark}

\begin{remark}[Iteration complexity in $d$]
\label{remark:certified-fitting-complexity-in-dimension}
We can also bound the iteration complexity in terms of the dimension $d=2k+1$, which enters through $K_{\mathrm{fit}}$ as well as through the exponent. By the gradient expressions in \Cref{eq:lipschitz-lemma-prediction-gradients}, $K_{\mathrm{fit}}\le c\,d$ for a constant $c$ independent of $d$, for any family of problems in which the parameter domains $\mathcal U$ are uniformly bounded coordinatewise (and uniformly bounded away from zero for $E$ and $r_i$) and the inputs $x_i^{(n)}$ are uniformly bounded and uniformly bounded away from $0$. Under these conditions, the worst-case branching-iteration and queue-memory complexities are at most $(2c\,d/\epsilon)^d$.

\end{remark}


\begin{proof}[Proof of \Cref{thm:main}\textup{(b)}]
Write
\[
    L_{\texttt{fit}}^\star
    =
    \min_{u\in\mathcal U}L_{\texttt{fit}}(u).
\]
By \Cref{thm:main}\textup{(a)}, Phase~1 of
\Cref{alg:viable-set-cover} terminates and returns a candidate
\(\hat u\) and a certified lower bound \(\underline L\) satisfying
\begin{equation}
    \underline L
    \leq
    L_{\texttt{fit}}^\star
    \leq
    L_{\texttt{fit}}(\hat u)
    \leq
    \underline L+\frac{\epsilon}{3}.
    \label{eq:viable-search-initial-certificate}
\end{equation}

Two invariants hold at every iteration of the algorithm:
\begin{itemize}
    \item Every box $B\in\mathcal B$ is contained in the sublevel set $\mathcal S^\phi_\epsilon$.
    \item Boxes discarded are strictly disjoint from the $\epsilon / 2$ sublevel set $\mathcal{S}^\phi_{\epsilon / 2}$.
\end{itemize}
Both follow directly from \Cref{lem:consistent-loss-enclosure} and \Cref{eq:viable-search-initial-certificate}.

It remains to show that the subdivision procedure terminates.
Consider again the subdivision history as a rooted binary tree: the root is \(\mathcal U\), and whenever a box is subdivided, its two children are the resulting subboxes (in the same way as in the proof of \Cref{thm:main}\textup{(a)}). Thus, every box ever accepted, discarded, or placed in the queue is a node of this tree. Note that a leaf in the tree can only ever be ``iterated upon'' once, in which case it is either expanded into two children, accepted, or discarded and then never iterated upon again.

As in the proof of \Cref{thm:main}\textup{(a)}, \Cref{eq:loss-gap-by-tree-depth} shows that a box \(B\) at depth \(m\) in the tree satisfies
\[
    \overline L_{\texttt{fit}}(B)
    -
    \underline L_{\texttt{fit}}(B)
    \leq
    K_{\mathrm{fit}}2^{-\lfloor m/d\rfloor}.
\]
On the other hand, a box is subdivided only if it satisfies neither classification rule. For any such box,
\(
    \underline L_{\texttt{fit}}(B)
    \leq L_{\texttt{fit}}(\hat u)+\frac{\epsilon}{2}\), \(\overline L_{\texttt{fit}}(B) > \underline L+\epsilon\), which, together with \Cref{eq:viable-search-initial-certificate}, implies
\[
    \overline L_{\texttt{fit}}(B)
    -
    \underline L_{\texttt{fit}}(B)
    >
    \frac{\epsilon}{2}
    -
    \bigl(L_{\texttt{fit}}(\hat u)-\underline L\bigr)
    \geq
    \frac{\epsilon}{6}.
\]
Let $M=d\lceil\log_2\max\{1,6K_{\mathrm{fit}}/\epsilon\}\rceil$, so that $K_{\mathrm{fit}}2^{-\lfloor M/d\rfloor}\le\epsilon/6$. A box at depth $M$ or greater then has loss gap at most $\epsilon/6$ and is never subdivided, so the subdivision tree has depth at most $M$ and at most $2^{M+1}-1$ nodes. Since the algorithm processes a distinct node at every iteration, Phase~2 terminates after at most $2^{M+1}-1\le2\max\{(12K_{\mathrm{fit}}/\epsilon)^d,1\}$ iterations, which is $\mathcal O(\epsilon^{-d})$ for fixed $d$.

At termination, the accepted and discarded boxes cover $\mathcal U$. The invariants therefore give $\mathcal S^\phi_{\epsilon/2}\subseteq\bigcup_{B\in\mathcal B}B\subseteq\mathcal S^\phi_\epsilon$. Applying $\phi^{-1}$ yields $\mathcal S_{\epsilon/2}\subseteq\mathcal C\subseteq\mathcal S_\epsilon$, proving the original-coordinate statement.
\end{proof}

\begin{proof}[Proof of \Cref{thm:main}\textup{(c)}]
We prove the upper-endpoint guarantee; the lower endpoint follows
symmetrically by applying a mirror argument to \(-L(x^\ast;u)\).

By \Cref{thm:main}\textup{(a)}, Phase~1 terminates and returns bounds
satisfying
\begin{equation}
    \underline L
    \leq
    L_{\texttt{fit}}^\star
    \leq
    L_{\mathrm{fit}}^{\mathrm{ub}}
    \leq
    \underline L+\frac{\eta}{2}, \qquad  L_{\texttt{fit}}^\star \coloneqq \min_{u\in\mathcal U}L_{\texttt{fit}}(u).
    \label{eq:prediction-search-initial-certificate}
\end{equation}

First, we establish the invariant that throughout Phase~2, the boxes in \(\mathcal Q\) cover \(\mathcal S^\phi_\epsilon\). This holds initially because
\(\mathcal Q=\{\mathcal U\}\). When a child \(C\) is omitted from the
queue, the pruning rule and \Cref{lem:consistent-loss-enclosure} give
\[
    L_{\texttt{fit}}(u)
    \geq
    \underline L_{\texttt{fit}}(C)
    >
    L_{\mathrm{fit}}^{\mathrm{ub}}+\epsilon
    \geq
    L_{\texttt{fit}}^\star+\epsilon
\]
for every \(u\in C\). Hence \(C\cap\mathcal S^\phi_\epsilon=\varnothing\),
so pruning preserves the covering property.

With this, we can establish the correctness of the termination condition. Suppose that the algorithm returns while processing \(B\).

Since \(B\) has maximum priority in the queue $\mathcal{Q}$, and the boxes in the queue cover $\mathcal{S}^\phi_\epsilon$, \Cref{lem:monotonicity} implies that
\[
    \sup\mathcal L_{x^\ast,\epsilon}
    \leq
    L(x^\ast;B^{\uppercorner}).
\]
On the other hand, the return condition gives, for every \(u\in B\),
\[
    L_{\texttt{fit}}(u)
    \leq
    \overline L_{\texttt{fit}}(B)
    \leq
    \underline L+\epsilon+\eta
    \leq
    L_{\texttt{fit}}^\star+\epsilon+\eta.
\]
Thus \(B\subseteq\mathcal S^\phi_{\epsilon+\eta}\), and in particular
\[
    L(x^\ast;B^{\uppercorner})
    \leq
    \sup\mathcal L_{x^\ast,\epsilon+\eta}.
\]
Therefore, the returned value satisfies the claimed inequality:
\[
    \sup\mathcal L_{x^\ast,\epsilon}
    \leq
    \overline L_{x^\ast}
    \leq
    \sup\mathcal L_{x^\ast,\epsilon+\eta}.
\]

It remains to prove finite termination. We use an argument similar to those in the previous two proofs. Every box \(B\) inserted into
the queue satisfies
\[
    \underline L_{\texttt{fit}}(B)
    \leq
    L_{\mathrm{fit}}^{\mathrm{ub}}+\epsilon.
\]
If such a box is subdivided rather than returned, then
\[
\begin{aligned}
    \overline L_{\texttt{fit}}(B)
    -
    \underline L_{\texttt{fit}}(B)
    &>
    \underline L+\epsilon+\eta
    -
    \bigl(L_{\mathrm{fit}}^{\mathrm{ub}}+\epsilon\bigr) \\
    &=
    \eta-
    \bigl(L_{\mathrm{fit}}^{\mathrm{ub}}-\underline L\bigr)
    \geq
    \frac{\eta}{2},
\end{aligned}
\]
where the final inequality follows from
\Cref{eq:prediction-search-initial-certificate}.

Let \(M'=d\lceil\log_2\max\{1,2K_{\mathrm{fit}}/\eta\}\rceil\). By \Cref{eq:loss-gap-by-tree-depth}, every box of depth at least \(M'\) has loss gap at most \(\eta/2\), so no such box is subdivided, and the subdivision tree has at most \(2^{M'+1}-1\) nodes.
Since every iteration either returns or subdivides a distinct node of this tree, the number of iterations is finite. 
Moreover, the queue cannot become empty before the algorithm returns: a child containing a global minimizer always satisfies the insertion criterion.
Hence Phase~2 returns after finitely many iterations.
\end{proof}

\subsection{Prediction bounds at arbitrary target scales}
\label{app:arbitrary-target-prediction-bounds}

\Cref{alg:certified-prediction-bound} in the main text assumes that $x_i^\ast\geq x_i^{\min}$ for every input coordinate, in which case the upper corner of each parameter box maximizes the prediction at $x^\ast$. Here, we give an algorithm that removes this assumption.

Recall that the reparameterized scaling law is $L(x;u) = E+\sum_{i=1}^k C_i r_i^{s_i(x)}$, with $s_i(x)=\log\frac{x_i}{x_i^{\min}}$.
The prediction is always nondecreasing in $E$ and $C_i$.
Its monotonicity with respect to $r_i$ depends on the sign of
$s_i(x^\ast)$:
\begin{equation}
    \frac{\partial L(x^\ast;u)}{\partial r_i}
    =
    C_i s_i(x^\ast)r_i^{s_i(x^\ast)-1}.
\end{equation}
Thus, when $x_i^\ast<x_i^{\min}$, the prediction is nonincreasing
rather than nondecreasing in $r_i$.

For a box $B=[B^{\lowercorner},B^{\uppercorner}]$, define
$\textsc{MaxCorner}(B,x^\ast)$ by selecting the upper endpoint of
the $E$ and $C_i$ coordinates and selecting
\begin{equation}
    r_i\bigl(\textsc{MaxCorner}(B,x^\ast)\bigr)
    =
    \begin{cases}
        r_i(B^{\uppercorner}),
            & x_i^\ast\geq x_i^{\min},\\
        r_i(B^{\lowercorner}),
            & x_i^\ast<x_i^{\min}.
    \end{cases}
\end{equation}
This corner satisfies
\begin{equation}
    L(x^\ast;u)
    \leq
    L\bigl(x^\ast;\textsc{MaxCorner}(B,x^\ast)\bigr)
    \qquad
    \text{for every }u\in B.
\end{equation}

The general prediction-bound algorithm therefore differs from the main-text version only by replacing each upper prediction corner $B^{\uppercorner}$ with $\textsc{MaxCorner}(B,x^\ast)$, the corner that maximizes the prediction at $x^\ast$ in the general case.
The fitting loss bounds are otherwise unchanged. The algorithm is given in \Cref{alg:certified-prediction-bound-general}.

\begin{algorithm}
\SetAlgoLined
\KwIn{
Search domain $\mathcal U=[\mathcal U^{\lowercorner}, \mathcal U^{\uppercorner}]$; dataset $\mathcal D$; target scale $x^\ast$;
inner loss tolerance $\epsilon>0$; loss tolerance slack $\eta>0$;
}
\KwOut{Prediction bound $\overline L_{x^\ast}$ satisfying
\[
    \sup\mathcal L_{x^\ast,\epsilon}
    \leq
    \overline L_{x^\ast}
    \leq
    \sup\mathcal L_{x^\ast,\epsilon + \eta} \qquad
    \text{where } \mathcal L_{x^\ast,\epsilon} \coloneqq \{L(x^\ast; u): u \in \mathcal{S}^\phi_{\epsilon}\}.
\]
}

\tcp{Phase 1: certify the globally optimal fitting loss}
$(\hat u,\underline L) \gets \textsc{CertifiedFittingSearch}
(\mathcal U,\eta/2,\mathcal D)$\;
$L_{\mathrm{fit}}^{\mathrm{ub}}
\gets
L_{\texttt{fit}}(\hat u;\mathcal D)$
\tcp*[f]{We have
$\underline L
\leq L_{\texttt{fit}}^\star
\leq L_{\mathrm{fit}}^{\mathrm{ub}}
\leq \underline L+\eta/2$
}

\tcp{Phase 2: search for the largest viable prediction}
Initialize a max-priority queue $\mathcal Q$\;
$\mathcal Q.\textsc{Push}
\bigl(
    \mathcal U,
    \text{priority}=L\bigl(x^\ast;\textsc{MaxCorner}(\mathcal U,x^\ast)\bigr)
\bigr)$
\tcp*[f]{Largest prediction first}
\;

\While{$\mathcal Q$ is not empty}{
    $B\gets\mathcal Q.\textsc{PopMax}()$
    \tcp*[f]{Box with the largest remaining prediction upper bound}\;

    Compute $\underline L_{\texttt{fit}}(B)$
    and $\overline L_{\texttt{fit}}(B)$\;

    \If(\tcp*[f]{$B$ is entirely contained in
        $\mathcal S^\phi_{\epsilon+\eta}$}){
        $\overline L_{\texttt{fit}}(B)
        \leq
        \underline L+\epsilon+\eta$
    }{
        \KwRet $L\bigl(x^\ast;\textsc{MaxCorner}(B,x^\ast)\bigr)$\;
    }
    \ForEach{child box $C\in\textsc{BranchRegion}(B)$}{
        Compute $\underline L_{\texttt{fit}}(C)$\;

        \If{
            $\underline L_{\texttt{fit}}(C)
            \leq
            L_{\mathrm{fit}}^{\mathrm{ub}}+\epsilon$
        }{
            $\mathcal Q.\textsc{Push}
            \bigl(
                C,
                \text{priority}=L\bigl(x^\ast;\textsc{MaxCorner}(C,x^\ast)\bigr)
            \bigr)$\;
        }
    }
}
\caption{Certified prediction upper bound at an arbitrary target scale}
\label{alg:certified-prediction-bound-general}
\end{algorithm}


\section{Datasets}
\label{app:datasets}
\subsection{Literature Scaling Ladder Datasets}
\label{app:scaling-ladder-datasets}
We run the evaluations on four real-world scaling ladder datasets from the literature:

\textbf{Apple DCLM Downstream Metrics~\citep{apple2025revisiting,li2024datacomp}:}
    This dataset comprises pretraining validation loss and downstream evaluations across language models ranging from $17\text{M}$ to $17\text{B}$ parameters trained on the DataComp for Language Models (DCLM-Baseline) corpus~\citep{li2024datacomp}. 
    The training runs (approximately) span five discrete Token-to-Parameter Ratios (TPR, $D/N$): $\frac{D}{N} \in \{10.0, 20.0, 40.0, 80.0, 160.0\}$.\footnote{The true TPRs vary slightly around these values, because model depth must be an integer while preserving a fixed width-to-depth ratio.}
    We use all $149$ runs evaluated on the native training mixture, with training compute spanning $C \approx 6ND \in [1.05\times 10^{18},\, 3.83\times 10^{22}]$ FLOPs. All $149$ runs are used for 2D parametric fitting ($N, D \to L$).
    
\textbf{Delphi Open Scaling Suite~\citep{marin2026delphi}:}
    Delphi is an open scaling suite produced by the Marin Project, consisting of decoder-only language models utilizing the Qwen~3 architecture trained on a multi-domain corpus of Nemotron-CC, StarCoderData, and ProofPile~2. 
    The suite includes $88$ training runs: $81$ IsoFLOP sweep runs distributed across seven compute budgets from $3.28\times 10^{18}$ to $3.64\times 10^{20}$ FLOPs and $7$ large-scale held-out validation runs up to $1.0\times 10^{23}$ FLOPs, used to verify extrapolations up to $300\times$ the training compute.
    We strictly fit scaling laws on the $81$ IsoFLOP sweep runs ($C \le 3.64\times 10^{20}$ FLOPs), holding out the large-scale runs for out-of-distribution evaluation.
    
Because raw training loss logs are sometimes not published in numerical formats, some papers rely on digitized curves extracted from publication figures.
We source both the Chinchilla and Llama~3 scaling ladder data from the standardized ``Open Athena IsoFLOP Experiments'' collection~\citep{czech2026problems},\footnote{Dataset: \url{https://huggingface.co/datasets/open-athena/isoflop-experiments}; project page: \url{https://openathena.ai/scaling-law-analysis/}.} which contains digitized IsoFLOP experiment data from prominent scaling studies. We use the following two datasets:

 \textbf{Chinchilla MassiveText (Epoch AI Extraction)~\citep{besiroglu2024chinchilla,hoffmann2022training,czech2026problems}:}
    This dataset consists of $123$ training runs of standard Transformer models evaluated on MassiveText across nine IsoFLOP compute tiers ($C \in [6.00\times 10^{18},\, 3.00\times 10^{21}]$ FLOPs). 
    The underlying values were originally extracted and audited via SVG vector plot digitization by Epoch AI~\citep{besiroglu2024chinchilla} from the IsoFLOP figures in DeepMind's original Chinchilla study~\citep{hoffmann2022training}, and subsequently standardized into the Open Athena benchmark suite on which we relied.
    
 \textbf{Meta Llama 3 IsoFLOP Suite~\citep{dubey2024llama,czech2026problems}:}
    This set consists of $133$ training runs across ten IsoFLOP compute tiers spanning $C \in [6.00\times 10^{18},\, 1.00\times 10^{22}]$ FLOPs. 
    The training loss points were digitized by the Open Athena Foundation~\citep{czech2026problems} directly from the vector graphics of Meta's Llama~3 Technical Report~\citep{dubey2024llama}.

\section{Experimental Details}
\label{sec:experimental-details}

Below, we describe the experimental setup and the procedure for generating each of the figures in the main text.
Unless otherwise noted, all figures use the four empirical scaling ladder datasets described in \Cref{app:scaling-ladder-datasets} and the fitting loss and parametric family described in \Cref{sec:chinchilla-fitting}.
All certified branch-and-bound searches use the C++ implementation of the algorithms in \Cref{sec:main-result}, with the monotonic reparameterization from \Cref{sec:setup-monotonicity}.

\paragraph{Figure~\ref{fig:non-identifiability-scaling-law}: The Rashomon Effect in Scaling Law Fitting.}
This figure illustrates the Rashomon effect (near non-identifiability) on synthetic data.
The ground-truth scaling function is a continuous piecewise power law with two components: exponents $\alpha_1 \approx 0.47$ and $\alpha_2 \approx 0.33$ joined continuously at a transition midpoint $x_m \approx 1.7 \times 10^{18}$ FLOPs, designed so that multiple near-optimal scaling laws can fit it well:
\begin{equation*}
y_{\text{clean}}(C) = 
\begin{cases} 
2.4721 \left( \frac{C}{1.691 \times 10^{18}} \right)^{-0.4739} & \text{if } C \le 1.691 \times 10^{18} \\ 
2.4721 \left( \frac{C}{1.691 \times 10^{18}} \right)^{-0.3310} & \text{if } C > 1.691 \times 10^{18} 
\end{cases}
\end{equation*}
The training data were generated by sampling ten observations from this ground truth over $C \in [10^{17}, 10^{21}]$ FLOPs with 2\% multiplicative log-normal noise ($\sigma = 0.02$).
Two distinct L-BFGS fits of the standard three-parameter power law $L(C; \theta) = E + AC^{-\alpha}$ are obtained by initializing near each piecewise component's parameters: these achieve fitting losses differing by less than $0.12\%$, and the better of the two, $\ell^\star = 5.9208\times10^{-5}$, is certified by \Cref{alg:certified-fit} to be within $0.02\%$ of the optimal loss (the certified lower bound is $5.9198\times10^{-5}$), yet they predict a $74.6\times$ compute efficiency gap at $3 \times 10^{23}$ FLOPs ($300\times$ beyond the training range).

The certified optimum and the certified prediction band (yellow) are computed over the parameter box $0 < E \le 1.1\,y_{\min}$, $0 < A\,C_{\min}^{-\alpha} \le 1.3\,y_{\max}$, $0 \le \alpha \le 1.5\max(\alpha_1, \alpha_2)\approx0.71$, where $C_{\min} = 10^{17}$ is the smallest training compute and $y_{\min}, y_{\max}$ are the smallest and largest training losses.  The band uses the threshold $\tau = \ell^\star + \epsilon$ with $\epsilon = 1\%$ of $\ell^\star$ ($\epsilon = 5.92 \times 10^{-7}$). Since $\ell^\star$ is at least the optimal loss, every fit within $1\%$ of the optimum has loss at most $\tau$; by the certified lower bound, every fit with loss at most $\tau$ is within $1.02\%$ of the optimum.  
The band is computed at $49$ compute values from $10^{17}$ to $4.05\times10^{23}$, including $3\times10^{23}$, with a prediction-tolerance variant of \Cref{alg:certified-prediction-bound}: for each compute value and each direction, the search returns a certified bound on the predictions of all fits with loss at most $\tau$, and stops once this bound is within a factor $e^{\delta}$, $\delta = 5\times10^{-4}$, of the prediction of such a fit. At each of these compute values, every such prediction therefore lies inside the band, and each band edge is within $0.05\%$ of a prediction attained by such a fit; between them, the band is interpolated log-log. At $3\times10^{23}$ FLOPs the band is $[0.1202, 0.2392]$.  
Each search starts from a fit found by local search (SLSQP), which speeds up finding a fit near the edge of the range but does not affect the certificate. With an optimized build, each of the $98$ searches took at most $45$ seconds, and computing the full figure, including the certified fit, took about two minutes on $60$ cores.

\paragraph{Figure~\ref{fig:viable-extrap-fans-all}: Sublevel Prediction Envelopes Across Scaling Ladders.}
For each of the four empirical scaling ladder datasets (\Cref{app:scaling-ladder-datasets}), we fit the five-parameter Chinchilla scaling law $L(N,D;\theta) = E + A_1 N^{-\alpha_1} + A_2 D^{-\alpha_2}$ using Chinchilla Approach~3 (\Cref{sec:chinchilla-fitting}) at varying initialization-grid densities $K \in \{2, 5, 8\}$, each yielding $K^5$ initialization points over the standard heuristic parameter ranges.
The grid ranges are set to be $\log E \in [-1, 1]$, $\log A_1, \log A_2 \in [0, 25]$, and $\alpha_1, \alpha_2 \in [0, 2]$ in line with those in \citep{hoffmann2022training}. Along each coordinate, the $K$ grid points are placed at the midpoints of $K$ equal subintervals of the range, so grids of different densities are in general not nested.
At each grid density $K$, the best fit among all $K^5$ local optimizations with L-BFGS determines the fitting loss threshold $L_{\mathrm{chin},K}$.
To construct the viable prediction envelopes, we search for parameters $\theta$ satisfying $L_\texttt{fit}(\theta) \leq L_{\mathrm{chin},K}$ that maximize and minimize the predicted loss at $300\times$ the maximum training compute along the canonical slice with 20 tokens per parameter ($D = 20N$).
We use multi-start constrained optimization (SLSQP) from $20$ starting points: the Chinchilla fit, up to $9$ further distinct fits from the grid sweep (all densities) with fitting loss at most $1.05\,L_{\mathrm{chin},K}$, taken in order of increasing loss, and random perturbations of the Chinchilla fit.
For each grid density $K$, the shaded band lies between the full scaling curves of the two resulting extreme fits, and so shows predictions attained by fits that match or improve upon $L_{\mathrm{chin},K}$. Because the search is local, the band is an inner approximation of the full sublevel-set envelope.
Training data points are colored by their token-per-parameter ratio.

\paragraph{Figure~\ref{fig:certified-bb-subdivision-step}: Visualization of a Branch-and-Bound Step.}
This figure illustrates a single iteration of the certified fitting search (\Cref{alg:certified-fit}) applied to a one-dimensional scaling law $L(x; \theta) = E + Ax^{-\alpha}$.
The axes show the parameter coordinates in the reparameterized space defined in \Cref{eq:certified-reparameterization}.
At timestep $t=39$, the box with the lowest loss lower bound is selected for splitting.
The three left columns display two-dimensional orthogonal slices of the parameter-box lattice through the certified (near-)optimal point $(E^\ast, C^\ast, r^\ast)$ in the monotonic coordinates, with background heatmaps showing the per-box loss lower bound $\underline{L}_\texttt{fit}(B)$.
At timestep $t=40$, this box is bisected along the exponent dimension $e^{-\alpha}$, producing two child boxes. The right column shows the corresponding scaling law prediction envelopes for the two child boxes---the fan of curves consistent with each box's parameter range---overlaid on the training data.
Child~1 (lower $e^{-\alpha}$ half) has a loss lower bound exceeding the incumbent's loss and is therefore pruned; Child~2 (containing the certified optimum) is retained.

\paragraph{Figure~\ref{fig:suboptimality-vs-runtime}: Certified Suboptimality vs.\ Runtime.}
For each of the four empirical scaling ladder datasets (\Cref{app:scaling-ladder-datasets}), we compare two fitting methods on the five-parameter Chinchilla scaling law.
The Chinchilla Approach~3 baseline runs L-BFGS sequentially from $K^5$ initializations at grid densities $K \in \{1, 2, \ldots, 8\}$, yielding eight data points per dataset at increasing cumulative wall-clock runtimes (SciPy implementation with the default hyperparameters).
The certified branch-and-bound method (\Cref{alg:certified-fit}), implemented in C++, is run at a sequence of decreasing tolerances $\epsilon \in \{10^{-1}, 3.3 \times 10^{-2}, \ldots, 4 \times 10^{-7}\}$.
For each method and checkpoint, we plot the guaranteed suboptimality interval: the returned candidate's fitting loss minus the tightest certified upper and lower bounds on the global optimum (both obtained from the branch-and-bound search).
For Chinchilla, this interval reflects uncertainty about the true global minimum. Of course, computing these suboptimality bounds requires the certificates provided by our branch-and-bound algorithm.

\paragraph{Figure~\ref{fig:sivia-viable-set}: Sublevel Set Cover and Prediction Envelope.}
This figure demonstrates the certified sublevel set cover algorithm (\Cref{alg:viable-set-cover}) on a synthetic one-dimensional scaling law $L(x;\theta) = E + Ax^{-\alpha}$.
The synthetic dataset consists of 10 observations sampled from a ground-truth scaling law with parameters $(E_\mathrm{true} = 0.5, A_\mathrm{true} \approx 4.58 \times 10^3, \alpha_\mathrm{true} = 0.21)$ over the compute range $x \in [10^{16}, 10^{20}]$ with small multiplicative Gaussian noise ($\sigma = 0.001$).
The fitting loss is the log-space Huber loss of \Cref{sec:chinchilla-fitting} with $\delta = 10^{-3}$. The certified global minimum, with fitting loss $\approx 2.29 \times 10^{-7}$, is first obtained via \Cref{alg:certified-fit} at tolerance $10^{-8}$.
The SIVIA-style cover is then computed with inner and outer fitting-loss tolerances $\epsilon/2 = 10^{-5}$ and $\epsilon = 2\times10^{-5}$, returning $2{,}695$ certified boxes in the reparameterized coordinates $(E,C,r)$ across $17{,}757$ explored nodes in approximately 11\,ms.
The three left panels display two-dimensional projections of the mapped viable set cover $\mathcal C$ in the original coordinates $(E,A,\alpha)$, using $\phi^{-1}$ to convert the search boxes back to these parameters. Light shading shows the full two-dimensional projections; dark shading shows cross-sections with the omitted coordinate fixed at its ground-truth value. Diamonds mark the ground-truth parameters $\theta_{\mathrm{true}}$.
The right panel maps $\mathcal C$ into the prediction space, showing the envelope of scaling curves $L(x;\theta)$ attainable by any $\theta\in\mathcal C$.

\end{document}